\renewcommand{\H}{\mathcal{H}}
\newtheorem{theorem}{Theorem}[section]
\newtheorem{lemma}[theorem]{Lemma}
\newtheorem{proposition}[theorem]{Proposition}
\newtheorem{corollary}[theorem]{Corollary}
\newtheorem{definition}[theorem]{Definition}
\newtheorem{remark}[theorem]{Remark}
\providecommand{\nor}[2]{\left\|{{#1}}\right\|_{{#2}}}
\providecommand{\scal}[2]{\left\langle{#1},{#2}\right\rangle}
\providecommand{\prob}[2]{\mathbb{P}_{#1}\left({#2}\right)}
\providecommand{\expect}[2]{\mathbb{E}_{#1}\left[#2\right]}
\newcommand{\R}{\mathbb R}
\newcommand{\N}{\mathbb N}
\newcommand{\hh}{\mathcal H}
\newcommand{\BH}{{\mathcal{B}{(\mathcal{H})}}}
\newcommand{\ran}[1]{\operatorname{Ran}#1}
\newcommand{\tr}[1]{\operatorname{Tr}#1}
\newcommand{\supp}[1]{\operatorname{supp}#1}
\newcommand{\support}{M}
\renewcommand{\S}{S}
\newcommand{\Cov}{C}
\title{On the Sample Complexity of Subspace Learning\footnote{This paper is the extended version of \citep{rudi2013sample}}}
\author{Alessandro Rudi, Guille D. Canas, Lorenzo Rosasco}
\begin{document}
\maketitle
\begin{abstract}
A large number of algorithms in machine learning, from principal component analysis (PCA), and its non-linear (kernel) extensions,  
 to more recent spectral embedding and support estimation methods, rely on estimating a linear subspace from samples. 
In this paper we introduce  a general formulation  of this problem and derive novel learning error estimates.  
Our results rely on  natural  assumptions on the spectral properties 
	of the covariance operator associated to  the data distribution, 
	and  hold for a wide class of metrics between subspaces.  
As special cases, we discuss sharp error estimates  for the reconstruction properties of PCA 
	and  spectral support estimation.
Key to our analysis is an operator theoretic approach 
	that has broad applicability to spectral learning methods.
\end{abstract}

\section{Introduction}
 The subspace learning problem is that of finding the smallest linear space supporting 
	data drawn from an unknown distribution. 
It is a classical problem in machine learning and statistics, 
	with several established algorithms addressing it, 
	most notably PCA and kernel PCA~\citet{jolliffe2005principal,scholkopf1997kernel}. 
It is also at the core of a number of spectral methods for data analysis, 
	including spectral embedding methods,  
	from classical multidimensional scaling (MDS)~\citet{borg2005modern,williams2002connection}, 
	to more recent manifold embedding methods~\citet{tenenbaum2000global,roweis2000nonlinear,belkin2003laplacian},
 	and spectral methods for support estimation~\citet{de2010spectral}. 
%Estimating the smallest linear space supporting data drawn from an unknown distribution 
%	is a classical problem in machine learning and statistics, 
%	with several stablished algorithms addressing it, 
%	most notably PCA and kernel PCA~\citet{jolliffe2005principal,scholkopf1997kernel}. 
Therefore knowledge of the speed of convergence of the subspace learning problem, %se estimators 
	with respect to the sample size, and the algorithms' parameters,
	 is of considerable practical importance.

Given a measure $\rho$ from which %$n$ i.i.d.\ 
	independent samples are drawn, 
	we aim to estimate the smallest subspace $\S_\rho$ that contains the support of $\rho$. 
In some cases, the support %of $\rho$ 
	may lie on, or close to, 
	a subspace of lower dimension than the embedding space, 
	and  it may  be of interest to learn such a subspace $\S_\rho$ 
	in order to replace the original samples by their local encoding with respect to $\S_\rho$. 

While traditional methods, such as PCA and MDS, 
	perform such subspace estimation in the data's original space, 
	other, more recent manifold learning methods, 
		such as isomap~\citet{tenenbaum2000global}, 
		Hessian eigenmaps~\citet{donoho2003hessian},
		maximum-variance unfolding~\citet{weinberger2004unsupervised,weinberger2006unsupervised,sun2006fastest},
		locally-linear embedding~\citet{roweis2000nonlinear, saul2003think}, 
		and Laplacian eigenmaps~\citet{belkin2003laplacian}
		(but also kernel PCA~\citet{scholkopf1997kernel}), 
	begin by embedding the data in a \emph{feature space}, in which subspace estimation is carried out. 
%	and spectral methods for support estimation~\citet{de2010spectral}. 
Indeed, as pointed out in \citet{ham2004kernel, bengio2004out, bengio2004learning}, 
	the algorithms in this family have a common structure. 
	They embed the data in a suitable Hilbert space $\H$, 
		and compute a linear subspace that best approximates the embedded data. %, in a certain sense. 
	The local coordinates in this subspace then become the new representation space. 
Similar spectral techniques may also be used to estimate the support of the data itself, 
	as discussed in~\citet{de2010spectral}. 

While the subspace estimates are derived from the available samples only, 
	or their embedding, the learning problem is concerned 
	with the quality of the computed subspace as an estimate of 
	$\S_\rho$ (the true span of the support of $\rho$). 
In particular, it may be of interest to understand the quality of these estimates, 
	as a function of the algorithm's parameters 
	(typically the dimensionality of the estimated subspace). 

We begin by defining the subspace learning problem (Sec.~\ref{sec:def}), 
	in a sufficiently general way to encompass a number of well-known problems as special cases
	(Sec.~\ref{sec:apps}). 
Our main technical contribution is a general learning rate for the subspace learning problem, 
	which is then particularized to common instances of this problem (Sec.~\ref{sec:summary}). 
Our proofs use novel tools from linear operator theory 
	to obtain learning rates for the subspace learning problem which 
		are significantly sharper than existing ones, 
		under typical assumptions, but also cover a wider range of performance metrics. 
While the more technical parts of the proofs are postponed to the Appendices, 
	a full sketch of the main proofs is given in Section~\ref{sec:sketch}, 
	including a brief description of some of the novel tools developed. 
We conclude with experimental evidence, and discussion (Sec.~\ref{sec:experiments} and~\ref{sec:discussion}).

\section{Problem definition and notation}\label{sec:def}% and summary of results}

%More specifically, g
Given a measure $\rho$ 
	with support $\support$ in the unit ball of a separable Hilbert space $\H$, 
	we consider in this work the problem of estimating, 
	from $n$ i.i.d.\ samples $X_n=\{x_i\}_{1\le i\le n}$, 
	the smallest linear subspace $\S_\rho:= \overline{\text{span}(\support)}$ that
	contains $\support$. %, that is, $\S_\rho := \overline{\text{span}(\support)}$. 

The quality of an estimate $\hat\S$ of $\S_\rho$, 
for a given metric (or error criterion) $d$, 
is characterized in terms of  
% by means of  
%	which typically depends on a parameter $k$, 
%	we wish to obtain 
	probabilistic bounds of the form
\begin{equation}\label{eq:learning:rate}
\textstyle
{
	\mathbb{P}\left[ d(\S_\rho, \hat\S) \leq \varepsilon(\delta,n,\rho) \right] 
		\ge 1-\delta, \quad 0<\delta \le 1.
}
\end{equation}
for some function $\varepsilon$ of the problem's parameters. % of the problem. 
We derive in the sequel such high probability bounds. 

In the remainder
the metric projection operator onto a subspace $S$ is denoted by $P_S$, 
	where $P_S^2 = P^*_S = P_S$ (every $P$ is idempotent and self-adjoint).  
We denote by $\|\cdot\|_\H$ the norm induced by the dot product $\left<\cdot,\cdot\right>_\H$ in $\H$, 
	and by $\nor{A}{p}:=\sqrt[p]{\tr(|A|^p)}$ the  $p$-Schatten, or $p$-class norm 
	of a linear bounded operator $A$~\citet[p.\ 84]{retherford1993hilbert}.

%\noindent{\bf Subspace estimates}. 
\subsection{Subspace estimates}\label{sec:estimates}
Letting $\Cov:=\mathbb{E}_{x\sim\rho}x \otimes x$ be 
	the (uncentered) covariance operator associated to $\rho$,
	it is $\S_\rho = \overline{\ran{\Cov}}$ (see Proposition~\ref{prop:range}). 
Similarly, given the empirical covariance $\Cov_n := \frac 1 n \sum_{i=1}^n x\otimes x$, % the empirical covariance, 
	we define the \emph{empirical subspace estimate} 
	\[\hat\S_n:=\text{span}(X_n) = \ran{\Cov_n} \] 
(note that the closure is not needed in this case because $\hat\S_n$ is finite-dimensional), 
and the 
	\emph{$k$-truncated (kernel) PCA subspace estimate} $\hat\S_n^k := \ran{\Cov_n^k},$
	where $\Cov^k_n$ is obtained from $\Cov_n$ by keeping only its $k$ top eigenvalues. 
%Common choices of subspace estimates $\hat\S$ are 
%	the empirical estimate $\hat\S_n:=\text{span}(X_n)$, 
%	and the $k$-truncated (kernel) PCA estimate $\hat\S_n^k$ 
%	(where $\hat\S_n^n=\hat\S_n$). 
Note that, since the PCA estimate $\hat\S^k_n$ 
	is spanned by the top $k$ eigenvectors of $C_n$, 
	then clearly $\hat\S^k_n \subseteq \hat\S^{k'}_n$ for $k < k'$, 
	and therefore $\{\hat\S^k_n\}_{k=1}^n$ forms a nested family of subspaces
	(all of which are contained in $\S_\rho$). 

As discussed in Section~\ref{sec:kpca}, since kernel-PCA reduces to regular PCA in a feature space~\citet{scholkopf1997kernel}
	(and %the kernel-PCA estimate 
		can be computed with knowledge of the kernel alone), %and without knowledge of the feature map), 
	the following discussion applies equally to kernel-PCA estimates, 
	with the understanding that, in that case, 
	$\S_\rho$ is the span of the support of $\rho$ \emph{in the feature space}. 

%In this work, we focus our attention on the performance of the 
%	$k$-truncated kernel-PCA estimate, which is a regularized version of $\hat\S_n$. 
%However, results are proven for estimates obtained from $\hat\S_n$ through 
%	an arbitrary regularization procedure (Section (ref)), and therefore apply to a large class of estimates. 

%\noindent{\bf Performance criteria}. %\label{sec:d_R}
\subsection{Performance criteria}\label{sec:d_R}
In order for a bound of the form of Equation~\eqref{eq:learning:rate} to be meaningful, 
	a choice of performance criteria $d$ must be made. 
We define the distance
\begin{equation}\label{eq:ddef}
	d_{\alpha,p}(U,V) := \|(P_U - P_V) \Cov^\alpha\|_p
\end{equation}
between subspaces $U,V$, 
which, as shown in Proposition~\ref{lem:dalphap} in the Appendix, 
	is a metric over the space of subspaces contained in $\S_\rho$, for  $0 \le\alpha\le \frac{1}{2}$ and $1\le p\le\infty$.
%	a certain space of self-adjoint operators, 
%	which includes 
%	
%	the space $\mathcal{S}_\rho(\H)$ of bounded self-adjoint operators with range in $\S_\rho$ 
%	
%We omit the dependence of $d_{\alpha,p}$ 
Note that %the definition of 
	$d_{\alpha,p}$ depends on $\rho$ through $C$
	but, in the interest of clarity, this dependence is omitted in the notation.

While of interest in its own right, 
	it is also possible to express important performance criteria as particular cases of $d_{\alpha,p}$. 
In particular, the so-called {\bf reconstruction error}~\citet{MaurerPontil10}:
\[\textstyle{
	d_R(\S_\rho, \hat\S) := 
		\mathbb{E}_{x\sim\rho} 
			\|P_{\S_\rho}(x) - P_{\hat\S}(x)\|_{\H}^2 
}\]
is $d_R(\S_\rho, \cdot) = d_{1/2,2}(\S_\rho,\cdot)^2$ (see Proposition~\ref{prop:dR}). 
%	while the {\bf gap}, or {\bf aperture distance}~\citet{gapmetric}:
%	%the \emph{gap}, or \emph{aperture} distance between subspaces
%	\[d_G(\S_\rho,\hat\S) := \|P_{\S_\rho} - P_{\hat\S}\|_{\infty},\]
%which measures the maximum distance (inside the unit ball) to $\hat\S$ over points in $\S_\rho$, 
%	can be written as $d_G = d_{0,\infty}$. 
%In the interest of generality, 
%we present in the sequel results using the more general metric $d_{\alpha,p}$, 
%with the understanding that they can be readily particularized to $d_R$ and $d_G$. 

Note that $d_R$ is a natural criterion because a $k$-truncated PCA estimate minimizes a suitable error $d_R$ 
	over all subspaces of dimension $k$. 
Clearly, $d_R(\S_\rho, \hat\S)$ vanishes whenever $\hat\S$ contains $\S_\rho$ and, 
	because the family $\{\hat\S^k_n\}_{k=1}^n$ of PCA estimates is nested,	
%From the above definition, 
%	and from the fact that the family $\{\S^k_n\}_{k=1}^n$ is nested, 
%	it is clear that $d_R(\S_\rho, \hat\S^k_n)$ 
	then $d_R(\S_\rho, \hat\S^k_n)$ is non-increasing with $k$. 
	%and that $d_R(\S_\rho, \hat\S)$ vanishes whenever $\hat\S$ contains $\S_\rho$. 

As shown in~\citet{MaurerPontil10}, 
	a number of unsupervised learning algorithms, 
	including (kernel) PCA, k-means, k-flats, sparse coding, 
	and non-negative matrix factorization, 
%	and k-flats, %~\citet{canaspoggiorosasco12}
	can be written as a minimization of $d_R$ over 
	an algorithm-specific class of sets 
	(e.g.\ over the set of linear subspaces of a fixed dimension in the case of PCA).
%	and thus for $x\sim\rho$ it is $P_{\S_\rho}(x)=x$. 

\section{Main results}\label{sec:summary}
%\noindent{\bf Summary}
Our main technical contribution is a bound 
	of the form of Eq.~\eqref{eq:learning:rate}, 
	for the $k$-truncated PCA estimate $\hat\S^k_n$ 
	(with the empirical estimate $\hat\S_n:=\hat\S^n_n$ being a particular case), 
	whose proof is postponed to~\ref{sec:sketch}. 

We begin by bounding the distance $d_{\alpha,p}$ between $\S_\rho$ and the $k$-truncated PCA estimate $\hat\S^k_n$, given a known covariance $\Cov$. 

\begin{theorem}
\label{thm:main}
Let $\{x_i\}_{1\le i\le n}$ be drawn i.i.d.\ 
	according to a probability measure $\rho$ 
	supported on the unit ball of a separable Hilbert space $\H$, 
	with covariance $\Cov$. 
Assuming $n > 3$, $0 < \delta < 1$, $0\leq \alpha\leq \frac{1}{2}$, $1\le p\le\infty$, 
	then the following holds uniformly for  $k \in \{1,\dots,n\}$:
\begin{equation}\label{eq:main}
   \prob{}{ \forall \, 1\leq k \leq n \;: \;\; d_{\alpha,p}(\S_\rho,\hat\S^k_n) 
		\le 3 t_k^\alpha \nor{\Cov^{\alpha}(\Cov+t_k I)^{-\alpha}}{p} }
		\ge 1-\delta
\end{equation}
%  with probability $1-\delta$.
where $t_k = \max\{\sigma_k,\frac{9}{n} \log \frac{n}{\delta}\}$, 
	and $\sigma_k$ is the k-th top eigenvalue of $\Cov$. 
\end{theorem}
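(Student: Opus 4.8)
The plan is to reduce the subspace distance to a product of a bounded-operator norm and the exact Schatten factor appearing on the right-hand side of \eqref{eq:main}, and then to control the bounded-operator norm by a single spectral-perturbation estimate that holds on one high-probability event. \textbf{Reduction.} Since $\S_\rho=\overline{\ran\Cov}$ (Proposition~\ref{prop:range}), the range of $\Cov^\alpha$ lies in $\S_\rho$, so $P_{\S_\rho}\Cov^\alpha=\Cov^\alpha$; writing $Q_k:=I-P_{\hat\S^k_n}$ this gives $d_{\alpha,p}(\S_\rho,\hat\S^k_n)=\nor{Q_k\Cov^\alpha}{p}$. I would insert a regularizer at level $t_k>0$ and factor
\[ Q_k\Cov^\alpha=\big(Q_k(\Cov+t_kI)^\alpha\big)\big(\Cov^\alpha(\Cov+t_kI)^{-\alpha}\big), \]
using that $\Cov^\alpha$ and $(\Cov+t_kI)^{-\alpha}$ commute. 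The Schatten Hölder inequality $\nor{AB}{p}\le\nor{A}{\infty}\nor{B}{p}$ then yields
\[ d_{\alpha,p}(\S_\rho,\hat\S^k_n)\le \nor{Q_k(\Cov+t_kI)^\alpha}{\infty}\,\nor{\Cov^\alpha(\Cov+t_kI)^{-\alpha}}{p}, \]
whose second factor is exactly the one in \eqref{eq:main}; it remains to prove $\nor{Q_k(\Cov+t_kI)^\alpha}{\infty}\le 3\,t_k^\alpha$.

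\textbf{Pulling the fractional power through $Q_k$.} Here $\nor{Q_k(\Cov+t_kI)^\alpha}{\infty}^2=\nor{Q_k(\Cov+t_kI)^{2\alpha}Q_k}{\infty}$. Because $0\le 2\alpha\le 1$, the map $x\mapsto x^{2\alpha}$ is operator concave, so the operator Jensen inequality for the unital compression $A\mapsto Q_kAQ_k$ on $\ran Q_k$ gives $Q_k(\Cov+t_kI)^{2\alpha}Q_k\preceq\big(Q_k(\Cov+t_kI)Q_k\big)^{2\alpha}$, hence $\nor{Q_k(\Cov+t_kI)^\alpha}{\infty}\le\nor{Q_k(\Cov+t_kI)Q_k}{\infty}^{\alpha}$. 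This reduces matters to the \emph{linear} compression of $\Cov+t_kI$, and it is precisely where the hypothesis $\alpha\le\tfrac12$ enters; I expect this operator-concavity step to be the conceptual crux of the argument.

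\textbf{Perturbation on a single event.} Since $Q_k$ is the spectral projection of $\Cov_n$ onto the eigenvalues below its top $k$, it commutes with $\Cov_n$ and $Q_k\Cov_nQ_k\preceq\sigma_{k+1}(\Cov_n)Q_k$. Splitting $\Cov=\Cov_n+(\Cov-\Cov_n)$ and using Weyl's inequality $\sigma_{k+1}(\Cov_n)\le\sigma_k+\nor{\Cov-\Cov_n}{\infty}$ gives, deterministically, $Q_k\Cov Q_k\preceq(\sigma_k+2\nor{\Cov-\Cov_n}{\infty})Q_k$. I would then invoke a Bernstein-type concentration inequality for the average of the bounded rank-one operators $x_i\otimes x_i$ to produce a single event of probability at least $1-\delta$ on which $\nor{\Cov-\Cov_n}{\infty}\le\frac9n\log\frac n\delta$. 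On this event, for every $k$ simultaneously, $\sigma_k+2\nor{\Cov-\Cov_n}{\infty}\le 3t_k$ by definition of $t_k$, so $Q_k(\Cov+t_kI)Q_k\preceq 4t_kQ_k$ and $\nor{Q_k(\Cov+t_kI)^\alpha}{\infty}\le(4t_k)^\alpha\le 2\,t_k^\alpha\le 3\,t_k^\alpha$ for $\alpha\le\tfrac12$. Combined with the reduction this proves \eqref{eq:main} uniformly in $k$; the uniformity is free, since $t_k$ is deterministic and the perturbation event does not depend on $k$.

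\textbf{Main obstacle.} The two genuinely nontrivial ingredients are the operator-Jensen step, which lets a fractional power pass through the non-commuting projection $Q_k$ and is what pins the admissible range to $\alpha\le\tfrac12$, and the infinite-dimensional operator Bernstein bound with the stated $\tfrac9n\log\tfrac n\delta$ scaling. The former is where I would expect to need the sharpest operator-theoretic tool; the latter is the most calculation-heavy and I would isolate it as a separate lemma.
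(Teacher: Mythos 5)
Your reduction to $\nor{Q_k(\Cov+t_kI)^\alpha}{\infty}\cdot\nor{\Cov^\alpha(\Cov+t_kI)^{-\alpha}}{p}$, the Hansen--Pedersen operator-Jensen step, and the compression/Weyl bound $Q_k\Cov Q_k\preceq(\sigma_k+2\nor{\Cov-\Cov_n}{\infty})Q_k$ are all correct, and they do constitute a genuinely different factorization from the paper's (which regularizes by the \emph{empirical} operator $(\Cov_n+tI)^\alpha$ and uses Cordes' inequality where you use operator Jensen). The proof breaks, however, at the concentration step: the claim that a Bernstein-type inequality gives $\nor{\Cov-\Cov_n}{\infty}\le\frac{9}{n}\log\frac{n}{\delta}$ with probability $1-\delta$ is false. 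Operator Bernstein inequalities (Theorem~\ref{thm:tropp-ineq}) carry a variance term of order $\sqrt{\sigma^2\log(\cdot)/n}$ with $\sigma^2=\nor{\expect{}{(x\otimes x-\Cov)^2}}{\infty}$, which for a distribution on the unit ball is of order $\nor{\Cov}{\infty}$; the resulting deviation bound is therefore $O(\sqrt{\log(n/\delta)/n})$, not $O(\log(n/\delta)/n)$. This is not an artifact of the inequality but of the statistics: if $\rho$ puts mass $1/2$ on each of two orthonormal vectors $e_1,e_2$, then $\nor{\Cov_n-\Cov}{\infty}=\abs{\hat p-1/2}$ for an empirical frequency $\hat p$, which by the central limit theorem exceeds $c/\sqrt{n}$ with constant probability --- eventually far larger than $\frac{9}{n}\log\frac{n}{\delta}$. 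With the true $n^{-1/2}$ scaling, your argument proves the theorem only with $t_k$ replaced by $\max\{\sigma_k,\,c\sqrt{\log(n/\delta)/n}\}$, a strictly weaker statement that degrades all downstream rates; this is exactly the gap the paper flags at the end of Section~\ref{sec:sketch} between additive concentration and its sharper alternative.

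The repair keeps your architecture but replaces additive by \emph{relative} concentration, which is the actual content of the paper's Lemma~\ref{lem:concentration}: control $B_n=(\Cov+t_0I)^{-1/2}(\Cov-\Cov_n)(\Cov+t_0I)^{-1/2}$ at the single level $t_0=\frac{9}{n}\log\frac{n}{\delta}$. The normalization shrinks both the range and variance proxies to $O(1/t_0)$, so the deviation bound becomes $\frac{2\beta}{t_0 n}+\sqrt{\frac{2\beta}{3t_0 n}}$, which is at most $1/2$ precisely when $t_0\gtrsim\frac{1}{n}\log\frac{n}{\delta}$. The event $\nor{B_n}{\infty}\le 1/2$ is equivalent to the two-sided ordering $\Cov+t_0I\preceq 2(\Cov_n+t_0I)$ and $\Cov_n+t_0I\preceq\frac{3}{2}(\Cov+t_0I)$, and both orderings propagate to every $t\ge t_0$ (add $(t-t_0)I$ to both sides), hence hold for all $t_k$ simultaneously; uniformity in $k$ thus remains free, as in your argument. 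On that single event your compression step gives $Q_k(\Cov+t_kI)Q_k\preceq 2\,Q_k(\Cov_n+t_kI)Q_k\preceq 2\bigl(\sigma_{k+1}(\Cov_n)+t_k\bigr)Q_k\preceq 3\bigl(\sigma_k+t_k\bigr)Q_k\preceq 6t_kQ_k$, where the middle inequality uses that $A\preceq B$ implies $\sigma_j(A)\le\sigma_j(B)$. Therefore $\nor{Q_k(\Cov+t_kI)^\alpha}{\infty}\le 6^\alpha t_k^\alpha\le 3t_k^\alpha$ for $\alpha\le\frac{1}{2}$, and your factorization then yields the theorem with the stated constant.
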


%\noindent{\bf Eigenvalue decay
We say that $\Cov$ has \emph{eigenvalue decay rate of order $r$} if
	there are constants $q,Q > 0$ such that $q j^{-r} \le \sigma_j \le Q j^{-r}$, 
	where $\sigma_j$ are the (decreasingly ordered) eigenvalues of $\Cov$, and $r>1$. 
From Equation~\eqref{eq:ddef} it is clear that, 
	in order for the subspace learning problem to be well-defined, it must be $\nor{C^\alpha}{p} < \infty$, 
	or alternatively: $\alpha p > 1/r$. %which is the case whenever $\alpha p > 1/r$. 
	Note that this condition is always met for $p=\infty$, 
		and also holds in the reconstruction error case ($\alpha=1/2, p=2$), 
		for any decay rate $r > 1$.
%that the subspace learning problem is well-defined only if $\alpha p > 1/r$, 
%	
%	\redfix{Note that $d_{\alpha,p}$ is well defined if and only if 
%		$\nor{C^\alpha}{p} < \infty$ that in this case is when $\alpha p > \frac{1}{r}$.}
	
%Such knowledge 
Knowledge of an eigenvalue decay rate can be incorporated into Theorem~\ref{thm:main}
	to obtain explicit learning rates, as follows. 

\begin{theorem}[Polynomial eigenvalue decay]
\label{thm:eig}
Let $\Cov$ have eigenvalue decay rate of order $r$. 
Under the assumptions of Theorem~\ref{thm:main}, it is, uniformly in $k \in \{1,\dots,n\}$, 
with probability $1-\delta$ 
%\[
%	d_{\alpha,p}(\S_\rho, \hat\S^k_n) = O\left(\left(\frac{\log n}{n}\right)^{\alpha - 1/{r p}\right). 
%\]
%Fixing $n$ in Theorem~\ref{thm:eig}, and  as a function 
%Assuming  $T$ to have a polynomial eigenvalue decay of degree $b$ (that is $c m^{-b} \leq \sigma_m(T) \leq C m^{-b}$ for any $m \geq 1$ and $b > 1$). Then, for $0\leq \alpha \leq \frac{1}{2}$ the following holds
%\begin{equation}
\begin{equation}\label{eq:bound}
%	d_{\alpha,p}(\S_\rho,\S^{k}_n) < 
%		\left\{
%			\begin{array}{ll}
%				C k^{-b\alpha + \frac{1}{p}} & \text{ if } k < k^*_n \\
%				C k_*^{-b\alpha + \frac{1}{p}} & \text{ if } k >= k^*_n
%			\end{array}
%		\right.
	d_{\alpha,p}(\S_\rho,\hat\S^{k}_n) \le 
	\left\{
		\begin{array}{ll}
			Q' k^{-r\alpha + \frac 1 p}  & \mbox{if } k < k^*_n
				\quad\quad\text{ (polynomial decay)}\\% of degree }-r\alpha+\frac 1 p\text{)}\\
			Q' {k^*_n}^{-r\alpha + \frac 1 p}  & \mbox{if } k \ge  k^*_n \quad\quad\text{ (plateau) }
		\end{array}
	\right.
%	Q' \min\left\{k, k^*_n\right\}^{-r\alpha + \frac{1}{p}}
\end{equation}
%C \min\left\{k, k^*_n\right\}^{-(r\alpha - \frac{1}{p})}.
%\end{equation}
where it is $k^*_n = \left(\frac{q n}{9 \log (n / \delta)}\right)^{1/r}$, 
%	and $c$ and $C$ are constants whose value is omitted here for brevity. 
%	and $Q'$ is a constant whose value is omitted here for brevity. 
%$\gamma = 1/r$, 
and $Q' = 3 \left(Q^{1/r}{\Gamma(\alpha p-1/r)\Gamma(1+1/r)}/{\Gamma(1/r)}\right)^{1/p}$.  
%	with $\gamma = \frac{1}{r}$.
\end{theorem}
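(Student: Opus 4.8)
The plan is to derive this as a deterministic consequence of Theorem~\ref{thm:main}. On the probability-$(1-\delta)$ event of that theorem one has, uniformly in $k$, $d_{\alpha,p}(\S_\rho,\hat\S^k_n)\le 3\,t_k^\alpha\,\nor{\Cov^\alpha(\Cov+t_kI)^{-\alpha}}{p}$ with $t_k=\max\{\sigma_k,\tfrac9n\log\tfrac n\delta\}$. The right-hand side depends only on the fixed spectrum $\{\sigma_j\}$ of $\Cov$, so it suffices to bound it using the decay hypothesis $qj^{-r}\le\sigma_j\le Qj^{-r}$; the resulting estimate then holds on the same event and uniformly in $k$, with no additional probabilistic step. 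The whole argument is thus an analysis of the deterministic map $t\mapsto 3\,t^\alpha\nor{\Cov^\alpha(\Cov+tI)^{-\alpha}}{p}$ evaluated at $t=t_k$.

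First I would diagonalize in the eigenbasis of $\Cov$, where $\Cov^\alpha(\Cov+tI)^{-\alpha}$ has eigenvalues $(\sigma_j/(\sigma_j+t))^\alpha$, so that $g(t):=t^\alpha\nor{\Cov^\alpha(\Cov+tI)^{-\alpha}}{p}=\big(\sum_j(t\sigma_j/(t+\sigma_j))^{\alpha p}\big)^{1/p}$. Each summand $t\sigma_j/(t+\sigma_j)=\sigma_j/(1+\sigma_j/t)$ is nondecreasing in $t$, so $g$ is nondecreasing; this monotonicity is what produces the two-regime shape. Bounding $\sigma_j\le Qj^{-r}$ termwise and comparing the decreasing sum to an integral gives $\sum_j(t\sigma_j/(t+\sigma_j))^{\alpha p}\le\int_0^\infty\big(Q/(Q+tx^r)\big)^{\alpha p}\,dx$, and the substitution $x=(Q/t)^{1/r}y$ factors the $t$- and $Q$-dependence out, reducing it to $(Q/t)^{1/r}\int_0^\infty(1+y^r)^{-\alpha p}\,dy$. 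The remaining integral is a Beta integral: with $u=y^r$ it equals $\tfrac1r B(\tfrac1r,\alpha p-\tfrac1r)=\Gamma(1+\tfrac1r)\Gamma(\alpha p-\tfrac1r)/\Gamma(\alpha p)$, which is finite precisely under the standing assumption $\alpha p>1/r$ flagged after Theorem~\ref{thm:main}. This yields $g(t)\le C_r\,t^{\alpha-1/(rp)}$ with $C_r$ collecting the $\Gamma$-factors and a power of $Q$, the exponent $\alpha-1/(rp)$ being positive for the same reason $\alpha p>1/r$.

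It then remains to turn the $t_k$-dependence into $k$-dependence through $t_k=\max\{\sigma_k,\tfrac9n\log\tfrac n\delta\}$. The threshold $k^*_n=(qn/(9\log(n/\delta)))^{1/r}$ is exactly the crossover of the two terms: for $k<k^*_n$ one checks $\tfrac9n\log\tfrac n\delta<qk^{-r}\le\sigma_k$, so $t_k=\sigma_k$, and inserting $\sigma_k\le Qk^{-r}$ into $C_r\,t_k^{\alpha-1/(rp)}$ produces the polynomial branch $Q'k^{-r\alpha+1/p}$. For $k\ge k^*_n$ I would avoid re-analyzing $t_k$ and instead invoke the monotonicity of $g$: since the spectrum is ordered, $t_k$ is nonincreasing in $k$, hence $g(t_k)$ is nonincreasing in $k$ and is bounded by its value at the crossover $k\approx k^*_n$, which is precisely the plateau $Q'{k^*_n}^{-r\alpha+1/p}$.

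The routine-but-delicate part, and where I would be most careful, is the sum-to-integral comparison together with the bookkeeping of the multiplicative constants: one must verify that the summand is monotone in $j$ so that the integral dominates the sum, confirm the Beta-integral evaluation and its convergence, and track the powers of $Q$ (and of $q$ at the crossover) so that they collapse into the single stated constant $Q'=3\big(Q^{1/r}\Gamma(\alpha p-1/r)\Gamma(1+1/r)/\Gamma(1/r)\big)^{1/p}$. Everything else — the reduction to Theorem~\ref{thm:main}, the diagonalization, and the regime split driven by monotonicity of $g$ — is structurally forced.
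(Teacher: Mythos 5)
Your proposal is correct and takes essentially the same route as the paper: the paper's entire proof is ``apply Lemma~\ref{lm:poldecay} to Theorem~\ref{thm:main}'', and that lemma's proof is precisely your argument --- termwise bound $\sigma_j \le Q j^{-r}$, monotone sum-to-integral comparison, and Beta-integral evaluation giving $\nor{\Cov^{\alpha}(\Cov+t I)^{-\alpha}}{p} = O\bigl(t^{-1/(rp)}\bigr)$ --- followed by the same case split of $t_k=\max\{\sigma_k,\tfrac{9}{n}\log\tfrac{n}{\delta}\}$ into the two regimes. Two cosmetic caveats: your displayed integrand drops a factor $t^{\alpha p}$ (your stated conclusion $g(t)\le C_r\,t^{\alpha-1/(rp)}$ is the correct one), and for the plateau branch the clean step is to bound $t_k \le Q\,(k^*_n)^{-r}$ for every integer $k\ge k^*_n$ (using $\sigma_k\le Q k^{-r}$ and $q\le Q$), since appealing to the ``value at the crossover'' is not quite licit when $k^*_n$ is not an integer and the nearest integer below it only yields the slightly weaker polynomial-branch bound.
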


The above theorem guarantees a drop in $d_{\alpha,p}$ with increasing $k$, 
	at a rate of $k^{-r\alpha+1/p}$, 
	up to $k=k^*_n$, after which the bound remains constant. 
The estimated plateau threshold $k^*$ is thus the value of 
	truncation past which the upper bound does not improve. 
Note that, as described in Section~\ref{sec:experiments}, 
	this performance drop and plateau behavior is observed in practice.

The proofs of Theorems~\ref{thm:main} and~\ref{thm:eig} rely on recent 
	non-commutative Bernstein-type inequalities on operators~\cite{bernstein1946,tropp2012user}, 
	and a novel analytical decomposition. 
Note that classical Bernstein inequalities in Hilbert spaces (e.g.~\cite{pinelis1994optimum}) could also be used instead of~\cite{tropp2012user}. 
However, while this approach would simplify the analysis, it produces looser bounds, as described in Section~\ref{sec:sketch}.

%
%We remark that 
%The piecewise nature of the above bound %of Equation~\eqref{eq:bound} 
%	stems from the fact that $d_R(\S_\rho, \hat\S^k_n)$  
%	is always non-decreasing with $k$ (Section~\ref{sec:d_R}). 
%Indeed, the bound proven in (ref) increases after $k^*$ 
%	and therefore, past $k^*$, it can simply be replaced by the value of the upper bound at $k=k^*$. 
%	and is therefore uninformative beyond $k^*$: 
%	past $k^*$ it can simply be replaced by a constant.
%A similar argument was employed in previous, looser bounds 
%	in the literature~\cite{blanchard2007statistical,shawe2005eigenspectrum}. 
%Such observation is similar 

%Note that, since the PCA estimate $\hat\S^k_n$ 
%	is the subspace spanned by the top $k$ eigenvectors of $C_n$, 
%	then clearly $\hat\S^k_n \subseteq \hat\S^{k'}_n$ for $k < k'$, 
%	and therefore the $\{\hat\S^k_n\}_{k=1}^n$ form a nested family of subspaces
%	(all of which are contained in $\S_\rho$. 
%

%Note that $k^*_n$, the upper bound's plateau threshold, is a function of both $n$ and $\delta$. 
%For the sake of conciseness, our notation omits the dependence on $\delta$, 
%	assuming that, if a confidence level is chosen a priori, 
%	then $k^*_n$ %can simply be chosen as a function of $n$. 
%	is simply a function of $n$. 

%\vspace*{0.1in}
%\noindent{\bf Rate of convergence with respect to the sample size}. 
If we consider an algorithm that produces, for each set of $n$ samples, 
	an estimate $\hat\S^k_n$ with $k\ge k^*_n$ 
	%(where a value of $\delta$ is assumed fixed a priori, 
	then, 
		by plugging the definition of $k^*_n$ into Eq.~\ref{eq:bound}, 
	we obtain an upper bound on 
	$d_{\alpha,p}$ as a function of $n$. % (by plugging the definition of $k^*_n$ into Eq.~\ref{eq:bound}).
%We may therefore consider an algorithm that chooses 

\begin{corollary}\label{cor:decay}%[Polynomial eigenvalue decay]\label{thm:eig}
Let $\Cov$ have eigenvalue decay rate of order $r$, 
	and $Q'$, $k^*_n$ be as in Theorem~\ref{thm:eig}. 
Let $\hat\S^*_n$ be a truncated subspace estimate $\hat\S^k_n$ with $k\ge k^*_n$. 
%With the same values of $Q'$ and $k^*_n$ %Under the same assumptions 
%	as in Theorem~\ref{thm:eig}, for any $0 < \delta < 1$, 
%	and letting $S^
	%and with the choice $\hat\S^*_n := \hat\S^{k^*_n}_n$, % with $k=k^*_n$, 
It is, with probability $1-\delta$, 
\[
	d_{\alpha,p}(\S_\rho, \hat\S^*_n) \le %O\left(\left(\frac{\log n}{n}\right)^{\alpha - \frac 1 {r p}}\right). 
		Q' \left( \frac{9 \left(\log{n} - \log\delta\right)}{q n}\right)^{\alpha - \frac 1 {r p}}
\]
\end{corollary}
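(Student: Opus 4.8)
The plan is to read this off directly from Theorem~\ref{thm:eig}, since the hypothesis $k \ge k^*_n$ places us squarely in the plateau regime of Equation~\eqref{eq:bound}. First I would invoke Theorem~\ref{thm:eig}, whose conclusion holds uniformly in $k \in \{1,\dots,n\}$ with probability $1-\delta$; in particular it holds for the (possibly $n$-dependent) choice of $k$ that defines $\hat\S^*_n$. Since that choice satisfies $k \ge k^*_n$, the second (plateau) branch applies, giving the intermediate bound
\[
	d_{\alpha,p}(\S_\rho, \hat\S^*_n) \le Q' \, (k^*_n)^{-r\alpha + \frac 1 p}
\]
on the same probability-$(1-\delta)$ event.

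The remaining work is purely algebraic: I would substitute the closed form $k^*_n = \left(\frac{q n}{9 \log(n/\delta)}\right)^{1/r}$ and simplify the exponent. Raising to the power $-r\alpha + \tfrac1p$ multiplies the inner exponent $\tfrac1r$ by this quantity, yielding $\tfrac1r\!\left(-r\alpha + \tfrac1p\right) = -\alpha + \tfrac{1}{rp}$, so that
\[
	(k^*_n)^{-r\alpha + \frac 1 p} = \left(\frac{q n}{9 \log(n/\delta)}\right)^{-\alpha + \frac{1}{rp}} = \left(\frac{9 \log(n/\delta)}{q n}\right)^{\alpha - \frac{1}{rp}}.
\]
Finally I would rewrite $\log(n/\delta) = \log n - \log\delta$ to match the stated form, which completes the proof.

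There is essentially no analytic obstacle here: the content is already carried by Theorem~\ref{thm:eig}, and the corollary merely specializes to $k \ge k^*_n$ and makes $k^*_n$ explicit. The one point deserving care is the monotonicity bookkeeping—since $t \mapsto t^{1/r}$ is increasing and the reciprocal inversion $\left(\frac ab\right)^{-s} = \left(\frac ba\right)^{s}$ preserves the inequality, the direction of the bound is maintained throughout. It is also worth noting that the resulting rate genuinely decreases in $n$ precisely when $\alpha - \tfrac{1}{rp} > 0$, i.e.\ $\alpha p > 1/r$, which is exactly the well-posedness condition $\nor{\Cov^\alpha}{p} < \infty$ identified after Theorem~\ref{thm:main}.
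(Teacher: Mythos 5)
Your proposal is correct and is essentially identical to the paper's own argument: the paper obtains Corollary~\ref{cor:decay} precisely by invoking the plateau branch of Equation~\eqref{eq:bound} for $k \ge k^*_n$ and substituting the closed form of $k^*_n$, with the same exponent simplification $\frac{1}{r}\left(-r\alpha + \frac{1}{p}\right) = -\alpha + \frac{1}{rp}$ and the rewriting $\log(n/\delta) = \log n - \log\delta$. Your additional remarks on the uniformity in $k$ (which licenses a data-dependent choice of $k$) and on the well-posedness condition $\alpha p > 1/r$ are consistent with the paper's discussion and do not change the argument.
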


\vspace*{0.05in}
\begin{remark}\label{rmk2}
Note that, 
	by setting $k=n$, 
	the above corollary also provides guarantees on the rate of convergence of the empirical estimate
	$\S_n=\text{span}(X_n)$ to $\S_\rho$, of order 
%	\begin{equation}\label{eq:rmk}
		\[ d_{\alpha,p}(\S_\rho, \S_n) = O\left(\left(\frac{\log{n} - \log\delta}{n}\right)^{\alpha - \frac 1 {r p}}\right). \]
  
\end{remark}

Corollary~\ref{cor:rec} and remark~\ref{rmk2} are valid for all $n$ such that $k^*_n \le n$ 
	(or equivalently such that  $n^{r-1}  (\log{n}-\log\delta) \ge q/9$). 
Note that, because $\rho$ is supported on the unit ball, its covariance has eigenvalues no greater than one, 
	and therefore it must be $q < 1$. It thus suffices to require that $n > 3$ to ensure the condition $k^*_n \le n$ 
	to hold.

\subsection{Proofs}\label{sec:sketch}

We provide here a proof of our main theoretical result (Theorem~\ref{thm:main}), with some of the intermediate technical results included in the Appendices.
For each $\lambda > 0$, we denote by $r^\lambda(x) := \mathbf{1}{\{ x > \lambda \}}$ the step function with a cut-off at $\lambda$. 
Given an empirical covariance operator $\Cov_n$, we will consider the truncated version $r^\lambda(\Cov_n)$ where,  in this notation, $r^\lambda$ is applied to the eigenvalues of $\Cov_n$, that is, $r^\lambda(\Cov_n)$ has the same eigen-structure as $\Cov_n$, but its eigenvalues that are less or equal to $\lambda$ are clamped to zero. 

In order to prove the bound of Equation~\eqref{eq:main}, we begin by proving a more general upper bound 
of $d_{\alpha,p}(\S_\rho,\hat\S^k_n)$, which is split into a random, and a deterministic part.
The bound holds for all values of a free parameter $t >0$, which is then constrained and optimized in order to find the (close to) tightest version of the bound. 

\begin{lemma}\label{lem:randdet}
Let $t>0$, $0\le\alpha\le \frac{1}{2}$, and $\lambda = \sigma_k(\Cov)$ be the $k$-th top eigenvalue of $\Cov_n$, it is, 
\begin{equation}\label{eq:randdet}
	d_{\alpha,p}(\S_\rho,\hat\S^k_n) ~\le~ 
					\underbrace{\| (\Cov+t I)^\frac{1}{2} (\Cov_n+t I)^{-\frac{1}{2}} \|^{2\alpha}_\infty}_{\mathcal{A}} ~\cdot~
					\underbrace{\left\{3/2(\lambda + t)\right\}^\alpha}_{\mathcal B} ~\cdot~ 
					\underbrace{\| \Cov^\alpha (\Cov+t I)^{-\alpha} \|_p}_{\mathcal C} 
\end{equation}
\end{lemma}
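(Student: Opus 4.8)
The plan is to turn the subspace distance into an operator-norm quantity, regularise it by $(\Cov+tI)$, and then peel off the three factors one at a time: $\mathcal C$ via a H\"older split, $\mathcal A$ via operator monotonicity, and $\mathcal B$ via the spectral truncation that defines $\hat\S^k_n$.

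First I would simplify the left-hand side. Since $\hat\S^k_n\subseteq\S_\rho=\overline{\ran{\Cov}}$ and $\ran{\Cov^\alpha}\subseteq\S_\rho$, the projection $P_{\S_\rho}$ acts as the identity on $\Cov^\alpha$, so with $P:=P_{\hat\S^k_n}$
\[
	d_{\alpha,p}(\S_\rho,\hat\S^k_n)=\nor{(P_{\S_\rho}-P)\Cov^\alpha}{p}=\nor{(I-P)\Cov^\alpha}{p}.
\]
Inserting the regulariser and using that $\Cov$ and $(\Cov+tI)^{\pm\alpha}$ all commute, $\Cov^\alpha=(\Cov+tI)^\alpha\,\Cov^\alpha(\Cov+tI)^{-\alpha}$, so the H\"older inequality for Schatten norms, $\nor{AB}{p}\le\nor{A}{\infty}\nor{B}{p}$, gives
\[
	\nor{(I-P)\Cov^\alpha}{p}\le\nor{(I-P)(\Cov+tI)^\alpha}{\infty}\cdot\nor{\Cov^\alpha(\Cov+tI)^{-\alpha}}{p},
\]
and the second factor is exactly $\mathcal C$. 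It remains to show $\nor{(I-P)(\Cov+tI)^\alpha}{\infty}\le\mathcal A\cdot\mathcal B$.

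Here I would use the hypothesis $\alpha\le\tfrac12$ decisively. Squaring, and using $\nor{B}{\infty}^2=\nor{BB^*}{\infty}$ with $B=(I-P)(\Cov+tI)^\alpha$, we get $\nor{(I-P)(\Cov+tI)^\alpha}{\infty}^2=\nor{(I-P)(\Cov+tI)^{2\alpha}(I-P)}{\infty}$. Set $M:=\nor{(\Cov+tI)^{1/2}(\Cov_n+tI)^{-1/2}}{\infty}^2$, so that $\mathcal A=M^\alpha$. The identity $\nor{A}{\infty}^2=\nor{A^*A}{\infty}$ applied to $A=(\Cov+tI)^{1/2}(\Cov_n+tI)^{-1/2}$ gives $(\Cov_n+tI)^{-1/2}(\Cov+tI)(\Cov_n+tI)^{-1/2}\le MI$, hence, conjugating by $(\Cov_n+tI)^{1/2}$, $(\Cov+tI)\le M(\Cov_n+tI)$. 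Because $2\alpha\in[0,1]$, the map $x\mapsto x^{2\alpha}$ is operator monotone (L\"owner--Heinz), so $(\Cov+tI)^{2\alpha}\le M^{2\alpha}(\Cov_n+tI)^{2\alpha}$; sandwiching by $I-P$ and using that $I-P$ is the spectral projection of $\Cov_n$ onto the eigenvalues discarded by the $k$-truncation (so $\Cov_n\le\theta$ there, with $\theta$ the largest such eigenvalue) yields $\nor{(I-P)(\Cov+tI)^{2\alpha}(I-P)}{\infty}\le M^{2\alpha}(\theta+t)^{2\alpha}$. Taking square roots gives $\nor{(I-P)(\Cov+tI)^\alpha}{\infty}\le\mathcal A\,(\theta+t)^\alpha$.

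The main obstacle is the last step, $(\theta+t)^\alpha\le\mathcal B=\{3/2\,(\lambda+t)\}^\alpha$: I must bound the empirical truncation threshold $\theta$, an eigenvalue of $\Cov_n$, by the population eigenvalue $\lambda=\sigma_k(\Cov)$, and it is precisely this spectral comparison --- a Weyl-type perturbation estimate between the spectra of $\Cov$ and $\Cov_n$ near level $k$ --- that produces the constant $3/2$. Everything else is mechanical; note also that the operator-monotonicity route makes transparent why $\alpha\le\tfrac12$ is required, being exactly the range in which $x\mapsto x^{2\alpha}$ preserves the order $(\Cov+tI)\le M(\Cov_n+tI)$.
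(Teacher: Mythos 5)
Your decomposition is essentially sound and, where it works, it is a genuinely different route from the paper's: the paper inserts the \emph{empirical} regularizer ($\Cov^\alpha = (\Cov_n+tI)^\alpha(\Cov_n+tI)^{-\alpha}\Cov^\alpha$), bounds $\nor{(P_{\S_\rho}-P)(\Cov_n+tI)^\alpha}{\infty}$ by the truncation level of $\Cov_n$, and then needs the Cordes inequality to convert $\nor{(\Cov_n+tI)^{-\alpha}(\Cov+tI)^\alpha}{\infty}$ into $\mathcal A$; you insert the \emph{population} regularizer, obtain $\mathcal C$ directly from H\"older, and replace Cordes by L\"owner--Heinz applied to $(\Cov+tI)\preceq M(\Cov_n+tI)$, with $M=\nor{(\Cov+tI)^{1/2}(\Cov_n+tI)^{-1/2}}{\infty}^2$. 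Both routes use $2\alpha\le 1$ in an equivalent way, and your chain up to $\nor{(I-P)(\Cov+tI)^\alpha}{\infty}\le\mathcal A\,(\theta+t)^\alpha$ is valid. (Minor caveat: your identity $P_{\S_\rho}\Cov^\alpha=\Cov^\alpha$ fails at the degenerate endpoint $\alpha=0$, where $\Cov^0=I$; one should instead use, as the paper does, that $P_{\S_\rho}-P\preceq I-P$ together with commutativity.)

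The genuine gap is the step you yourself flag as ``the main obstacle,'' and it is not mechanical, nor is it a deterministic Weyl-type perturbation fact: no inequality of the form $\sigma_{k+1}(\Cov_n)+t\le\frac{3}{2}(\sigma_k(\Cov)+t)$ can hold for every realization of $\Cov_n$, since the empirical spectrum can exceed the population one by an arbitrary amount. It also cannot be extracted from $\mathcal A$: the quantity $M$ controls only the ordering $(\Cov+tI)\preceq M(\Cov_n+tI)$, i.e.\ it bounds population eigenvalues by empirical ones; the direction you need would require controlling $\nor{(\Cov_n+tI)^{1/2}(\Cov+tI)^{-1/2}}{\infty}^2$, the norm of the \emph{inverse} operator, which is not bounded by any function of $M$. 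The missing ingredient is probabilistic, and it is exactly how the paper closes this step: it invokes Lemma~\ref{lem:concentration}, whose proof shows that on the event $\nor{(\Cov+tI)^{-1/2}(\Cov-\Cov_n)(\Cov+tI)^{-1/2}}{\infty}\le\frac{1}{2}$ (probability at least $1-\delta$ once $t\ge\frac{9}{n}\log\frac{n}{\delta}$) one has $\Cov_n+tI\preceq\frac{3}{2}(\Cov+tI)$, whence monotonicity of eigenvalues under the L\"owner order gives $\sigma_j(\Cov_n)+t\le\frac{3}{2}(\sigma_j(\Cov)+t)$ for every $j$; that event is the sole source of the constant $\frac{3}{2}$. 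So to complete your argument you must either restrict to that event (accepting, as the paper implicitly does, that the lemma holds jointly with the concentration bound later used for $\mathcal A$), or add $\Cov_n+tI\preceq\frac{3}{2}(\Cov+tI)$ as an explicit hypothesis; as written, your purely deterministic chain cannot be closed.
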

\begin{proof}
%Letting $r^\lambda:\mathbb{R}\rightarrow\mathbb{R}$, $r^\lambda(x) = \mathbbm{1}\{x \ge \lambda\}$ 
%	be the step function at $\lambda$, 
%	it is $r^\lambda(\Cov_n) = \Cov^k_n$ where the convention $r^\lambda(\Cov_n)$ indicates that
%		$r^\lambda$ is applied to the eigenvalues of $\Cov_n$. 
Let the shorthands $P_\rho := P_{\S_\rho}$, and $P^\lambda_n := P_{\hat\S^k_n}$ denote 
	the metric projection operators onto $\S_\rho$ and $\hat\S^k_n$, respectively. 
By the definition of $d_{\alpha,p}$ (Eq.~\eqref{eq:ddef}), and for all $t>0$, it is 
\begin{eqnarray}
	d_{\alpha,p}(\S_\rho,\hat\S^k_n) &=& \|(P_\rho - P^k_n) \Cov^\alpha\|_p  
		= \| (P_\rho - P^k_n)(\Cov_n+t I)^\alpha(\Cov_n+t I)^{-\alpha}\Cov^\alpha \|_p  \\
\label{eq:2terms}	&\le& \| (P_\rho-P^k_n)(\Cov_n+t I)^\alpha\|_\infty ~\cdot~ \|(\Cov_n+t I)^{-\alpha} \Cov^\alpha\|_p
\end{eqnarray}
We now bound the two terms of Equation~\ref{eq:2terms}. 

\vspace*{0.1in}
\noindent[{\bf Bound $\| (P_\rho-P^k_n)(\Cov_n+t I)^\alpha\|_\infty \le \mathcal B$}]. ~~ %\lambda^\alpha + t^\alpha$}. ~~~
Since $P_\rho$ is a metric (orthogonal) projection onto a linear subspace, 
	then clearly it is \emph{dominated} by the identity: $P_\rho \preceq I$, 
	where $\preceq$ is L\"owner's partial ordering (Appendix~\ref{app:Lowner}). Moreover $P_\rho$ and $P^k_n$ are commutative because $P^k_n$ is associated to a linear space that is a subspace of the one associated to $P_\rho$. Thus applying Lemma~\ref{prop:Lownprop} we have $\| (P_\rho-P^k_n)(\Cov_n+t I)^\alpha\|_\infty  \leq \| (I-P^k_n)(\Cov_n+t I)^\alpha\|_\infty$. 
%	($I-P_\rho$ is positive semidefinite). 
 and therefore
\begin{equation}\begin{split}
\| (P_\rho-P^k_n)(\Cov_n+t I)^\alpha\|_\infty 
	&\underset{(P_\rho \preceq I)}{\le} \| (I - P^k_n) (\Cov_n+t I)^\alpha\|_\infty  \\
	&\le \left(\sigma_k(C_n) + t\right)^\alpha
\end{split}\end{equation}
where the last inequality follows from the bound
\begin{equation}\label{eq:reg}
	\| (I - P^k_n)(\Cov_n+t I)^\alpha\|_\infty 
		\le \sup_{\sigma \in [0,1]} \left(1 - r^\lambda(\sigma)\right)(\sigma +t)^\alpha 
		\le (\sigma_k(C_n) + t)^\alpha. 
\end{equation}

Note that the middle expression in Equation~\ref{eq:reg} comes from the function 
\[
	Q_\mu(\alpha, \lambda) := \displaystyle{\sup_{0\le \sigma\le 1} (1 - \mu^\lambda(\sigma))\sigma^\alpha}
\]
of a general regularizing function $\mu^\lambda$, 
	whose definition is standard in the theory of inverse problems~\cite{engl1996regularization}.
In the case of the step function $r^\lambda$, it is $Q_r(\alpha, \lambda) = \lambda^\alpha$. 

The inequality is proven applying Lemma~\ref{lem:concentration}, indeed we have
$\nor{(C+tI)^{\frac{1}{2}}(C_n+tI)^{-\frac{1}{2}}}{\infty}^2 \geq \frac{2}{3}$. This is equivalent to $C_n  + t \preceq \frac{3}{2} (C + t)$ by (Lemma~\ref{prop:Lownprop} point 4), where the $\preceq$ is the L\"owner partial order. Note that given two positive semidefinite compact operators $A, B$, $A \preceq B$ implies that $\sigma_k(A) \leq \sigma_k(B)$ for each $k \geq 1$ (\cite{gohberg2003basic} page 186).
Thus $\sigma_k(C_n) + t \leq \frac{3}{2}(\sigma_k(C) + t)$ and finally $$\| (P_\rho-P^k_n)(\Cov_n+t I)^\alpha\|_\infty  \leq \left\{3/2(\lambda + t)\right\}^\alpha$$.

\vspace*{0.1in}
\noindent[{\bf Bound $\|(\Cov_n+t I)^{-\alpha} \Cov^\alpha\|_p 
			\le \mathcal A\cdot \mathcal C$}]. ~~  
The right-hand side term of Equation~\ref{eq:2terms} can be bounded from above by 
letting $A_n := (\Cov_n+t I)^{-1}  (\Cov+t I)$, and noting that
\begin{eqnarray}
\|(\Cov_n+t I)^{-\alpha} \Cov^\alpha\|_p &=& \| (\Cov_n + t I)^{-\alpha} (\Cov + t I)^{\alpha}(\Cov + t I)^{-\alpha}  \Cov^\alpha \|_{p} \\
{} &\le& \| (\Cov_n + t I)^{-\alpha} (\Cov + t I)^{\alpha} \|_\infty \|(\Cov + t I)^{-\alpha}  \Cov^\alpha \|_{p} \\
\label{eq:AC} &\le& \| (\Cov_n + t I)^{-\frac{1}{2}} (\Cov + t I)^\frac{1}{2} \|^{2\alpha}_\infty \|(\Cov + t I)^{-\alpha}  \Cov^\alpha \|_{p}\\
 {} &=& \mathcal A\cdot\mathcal C
\end{eqnarray}
where all the steps, except for Equation~\ref{eq:AC}, are simple substitutions and rearrangements. 
As for Equation~\ref{eq:AC}, it holds by Cordes inequality \citet{Furuta89}.
\end{proof}

Note that the right-hand side of Equation~\eqref{eq:randdet} is the product of three terms, the left of which ($\mathcal{A}$) involves the empirical covariance operator $\Cov_n$, which is a random variable, and the right two ($\mathcal{B}$, $\mathcal{C}$) are entirely deterministic. 
While the term $\mathcal{B}$ has already been reduced to the known quantities$t,\alpha,\lambda$, the remaining terms are bound next. We next bound each term in turn ($\mathcal{B}$ is already 
We bound the random term $\mathcal{A}$ in the next Lemma, whose proof (postponed to Appendix~\ref{app:proofs}) makes use of recent concentration results~\citet{tropp2012user}. 

\begin{lemma}[Term $\mathcal{A}$]\label{lem:concentration}
Let $0\le\alpha\le \frac{1}{2}$, if $\frac 9 n \log{\frac n \delta} \le t \le \|C\|_\infty$. Then with probability $1-\delta$ it is \[ (2/3)^\alpha \leq \| (\Cov+t I)^\frac{1}{2} (\Cov_n+t I)^{-\frac{1}{2}} \|^{2\alpha}_\infty   \le  2^\alpha \]
\end{lemma}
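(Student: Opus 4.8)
The plan is to bound the quantity $\nor{(C+tI)^{1/2}(C_n+tI)^{-1/2}}{\infty}^{2\alpha}$ by controlling the operator $(C+tI)^{1/2}(C_n+tI)^{-1}(C+tI)^{1/2}$, whose norm equals the square of the quantity of interest (since $\nor{A}{\infty}^2 = \nor{A^*A}{\infty} = \nor{AA^*}{\infty}$ with $A = (C+tI)^{1/2}(C_n+tI)^{-1/2}$). The natural strategy is to write
\[
	(C+tI)^{1/2}(C_n+tI)^{-1}(C+tI)^{1/2} = \left[ (C+tI)^{-1/2}(C_n+tI)(C+tI)^{-1/2} \right]^{-1},
\]
so that the task reduces to a two-sided bound on the symmetrized, regularized empirical covariance $M_n := (C+tI)^{-1/2}(C_n+tI)(C+tI)^{-1/2}$. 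Concretely, I would show that with probability $1-\delta$ one has $\tfrac{2}{3}I \preceq M_n \preceq 2I$ in the L\"owner order; inverting and taking norms then yields $\tfrac{1}{2} \preceq M_n^{-1} \preceq \tfrac{3}{2}I$, whence the squared quantity lies in $[\tfrac{2}{3}, 2]$ and raising to the power $\alpha \in [0,\tfrac12]$ gives the stated interval $[(2/3)^\alpha, 2^\alpha]$ (using that $x \mapsto x^\alpha$ is operator monotone and that $\alpha \le 1$ keeps the exponentiated constants within the claimed range).

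The core of the argument is the concentration bound on $M_n$. Writing $M_n = I + (C+tI)^{-1/2}(C_n - C)(C+tI)^{-1/2}$, it suffices to control the self-adjoint random operator
\[
	Z_n := (C+tI)^{-1/2}(C_n - C)(C+tI)^{-1/2} = \frac{1}{n}\sum_{i=1}^n \left[ (C+tI)^{-1/2}(x_i\otimes x_i - C)(C+tI)^{-1/2} \right]
\]
in operator norm, since $\nor{Z_n}{\infty} \le 1/3$ is precisely equivalent to $\tfrac23 I \preceq M_n \preceq \tfrac43 I \preceq 2I$. This is a sum of i.i.d.\ centered self-adjoint operators, so I would invoke the non-commutative Bernstein inequality of~\citet{tropp2012user}. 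To apply it I need two ingredients: a uniform bound $L$ on the summands, obtained from $\nor{(C+tI)^{-1/2}(x\otimes x)(C+tI)^{-1/2}}{\infty} \le \nor{x}{\H}^2 / t \le 1/t$ (using $\nor{x}{\H}\le 1$ on the unit ball), and a variance bound involving $\nor{\expect{}{(C+tI)^{-1/2}(x\otimes x)(C+tI)^{-1}(x\otimes x)(C+tI)^{-1/2}}}{\infty}$, which is controlled by a similar $1/t$ factor after bounding the middle inverse by $1/t$ and recognizing the remaining expectation as a regularized second moment of order $\nor{(C+tI)^{-1/2}C(C+tI)^{-1/2}}{\infty} \le 1$. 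Feeding $L \approx 1/t$ and the variance proxy into Bernstein's tail and setting the failure probability to $\delta$ produces a deviation of order $\sqrt{\tfrac{1}{nt}\log\tfrac{n}{\delta}} + \tfrac{1}{nt}\log\tfrac{n}{\delta}$; the hypothesis $t \ge \tfrac9n\log\tfrac n\delta$ is exactly what forces this deviation below the threshold $1/3$.

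The main obstacle I anticipate is making the variance computation tight enough that the constant $9$ in the hypothesis $t \ge \tfrac9n\log\tfrac n\delta$ genuinely yields the clean interval $[2/3, 2]$ rather than some looser constants. This requires carefully tracking both the linear (Bernstein variance) and quadratic (uniform bound) terms in the tail, balancing them against the single threshold $1/3$, and verifying that the effective dimension or intrinsic-dimension version of Tropp's inequality does not introduce extra logarithmic factors beyond the $\log\tfrac n\delta$ already budgeted. A secondary technical point is the passage from the scalar interval on $\nor{M_n^{-1}}{\infty}$ to the operator-monotone exponentiation by $\alpha$: I must confirm that raising a scalar bound to the $\alpha$ power commutes correctly with the operator norm, which is immediate here because $M_n^{-1}$ is positive and $\nor{M_n^{-1}}{\infty}^\alpha = \nor{(M_n^{-1})^\alpha}{\infty}$ for positive operators, so the spectral bounds transfer directly.
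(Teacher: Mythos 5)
Your strategy is the same as the paper's: reduce the squared quantity to $\|M_n^{-1}\|_\infty$ with $M_n := (C+tI)^{-1/2}(C_n+tI)(C+tI)^{-1/2} = I + Z_n$, and control $\|Z_n\|_\infty$ via Tropp's non-commutative Bernstein inequality with uniform bound $R \le 1/t$ and variance proxy $\preceq \frac{1}{t}\,C(C+tI)^{-1}$ (the paper works with $B_n = -Z_n$, which is immaterial). However, one step fails as written: your calibration of the threshold. You require $\|Z_n\|_\infty \le 1/3$ and assert that the hypothesis $t \ge \frac{9}{n}\log\frac{n}{\delta}$ is ``exactly what forces this deviation below the threshold $1/3$''. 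It is not. Substituting $tn = 9\beta$ into the Bernstein deviation $\frac{2\beta}{tn} + \sqrt{\frac{2\beta}{3tn}}$ gives $\frac{2}{9} + \sqrt{\frac{2}{27}} \approx 0.49$, i.e.\ just below $1/2$; indeed $9 \ge \frac{4(4+\sqrt{7})}{3} \approx 8.86$ is precisely the constant tuned to the threshold $1/2$, and pushing the deviation below $1/3$ would instead require $tn \gtrsim 15.7\,\beta$. So with the stated hypothesis your event $\{\|Z_n\|_\infty \le 1/3\}$ is not guaranteed, and your proof does not close. The repair is that $1/3$ is more than you need: $\|Z_n\|_\infty \le 1/2$ already suffices, since it gives $M_n \succeq \frac{1}{2}I$ (hence $\|M_n^{-1}\|_\infty \le 2$, yielding the upper bound $2^\alpha$) and $M_n \preceq \frac{3}{2}I$ (hence $\|M_n^{-1}\|_\infty \ge 1/\|M_n\|_\infty \ge \frac{2}{3}$, yielding the lower bound $(2/3)^\alpha$). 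This is exactly the paper's calibration.

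A secondary logical slip: from your display $\frac{1}{2}I \preceq M_n^{-1} \preceq \frac{3}{2}I$ you conclude that the squared quantity lies in $[\frac{2}{3},2]$, but that display only yields $\|M_n^{-1}\|_\infty \in [\frac{1}{2},\frac{3}{2}]$, and since $\frac{1}{2} < \frac{2}{3}$ the claimed lower bound does not follow from it. The lower bound must be routed through $\|M_n^{-1}\|_\infty \ge 1/\|M_n\|_\infty$ together with the upper L\"owner bound on $M_n$ --- an ingredient you do have in hand, but the chain as written proves too little. With these two repairs (threshold $1/2$ in place of $1/3$, and the norm lower bound obtained from $\|M_n\|_\infty$), your argument coincides with the paper's proof.
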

\begin{proof}
By defining the operator $B_n := (\Cov+t I)^{-1/2}(\Cov-\Cov_n)(\Cov+t I)^{-1/2}$, 
	it is simple to verify that it is
\begin{equation}\label{eq:AntoBn}
	\| (\Cov+t I)^\frac{1}{2} (\Cov_n+t I)^{-\frac{1}{2}} \|^{2\alpha}_\infty = \| (\Cov+t I)^\frac{1}{2} (\Cov_n+t I)^{-1}  (\Cov+t I)^\frac{1}{2}\|^\alpha_\infty = \| (I-B_n)^{-1} \|^\alpha_\infty \le (1 - \|B_n\|_\infty)^{-\alpha}
\end{equation}
where the last inequality follows from the fact that 
	$(I-B_n)^{-1} \preceq (1-\|B_n\|_\infty)^{-1} I$ whenever $\|B_n\|_\infty < 1$, 
	and property 3 of Lemma~\ref{prop:Lownprop}. 
We now prove a probabilistic upper bound for $\|B_n\|_\infty$. % which, 
%	by Equation~\ref{eq:AntoBn}, implies $\| (\Cov+t I) (\Cov_n+t I)^{-1} \|^\alpha_\infty  \le 2^\alpha \le  2$. 

In order to bound $\|B_n\|_\infty$, we make use of Theorem~\ref{thm:tropp-ineq}, 
	%(Theorem 7.3.1 of~\ref{tropp2012user}), 
	which is included in Appendix~\ref{app:concentration} for completeness. 
In particular, we set the parameters of Theorem~\ref{thm:tropp-ineq} as follows. 
Let $Z := U\otimes U$, with $U := (\Cov+t I)^{-1/2} X$, be a random variable, 
	where $X\sim\rho$ is the random variable from which the data is sampled. 
Since it is \[ \|Z\|_\infty \le \|(\Cov +t I)^{-1}\|_\infty \|X\|^2_\H \le 1/t, \]
	we let $R := 1/t$, and 
	$T:= \mathbb{E}[Z]=\Cov(\Cov+t I)^{-1}$. 
Since it is
\[
	\mathbb{E}_{X\sim\rho}[(U\otimes U - T)^2]
	   = \mathbb{E}_{X\sim\rho}[ \|U\|^2_\H U\otimes U - T^2 ] 
	   \preceq \mathbb{E}_{X\sim\rho}[ \|U\|^2_\H U\otimes U] \preceq R T, 
\]
we set $S := R T$. 
Finally, it is $\sigma^2 = \|R T\|_\infty \le 1/t$, and $d = \|S\|_1 / \|S\|_\infty \le \frac{(\|\Cov\|_\infty + t)\|T\|_1}{\|\Cov\|_\infty}$. 
With this choice of parameters, Theorem~\ref{thm:tropp-ineq} implies that, with probability $1-\delta$, it is
\begin{equation}\label{eq:bbn}
	%\mathbb{P}\left[
		\|B_n\|_\infty \le \frac{2\beta}{t n} + \sqrt\frac{2\beta}{3 t n}  
		%\right] \ge 1-\delta
	%, ~\text{ with }~ \beta := \log\frac{4(\|\Cov\|_\infty+t)\|T\|_1}{\|\Cov\|_\infty}
\end{equation}
with $\beta = \log\frac{4(\|\Cov\|_\infty+t)\|T\|_1}{\|\Cov\|_\infty}$. 

By requiring that $t \ge 9\beta/n \ge 4(4+\sqrt 7)\beta/3n$, 
	it can be verified (by substituting $4(4+\sqrt 7)\beta/3n$ into the right-hand side of Equation~\ref{eq:bbn})
	that this implies $\mathbb{P}\left[ \|B_n\|_\infty \le 1/2 \right] \ge 1-\delta$. 
The expression $t \ge 9\beta/n$, however, 
	isn't yet a condition on $t$, 
	since the right-hand side of the inequality still depends on $t$. 
Although we not may solve the inequality for $t$ in closed-form, 
	it is easy to verify that the condition $t \ge \frac 9 n \log\frac n \delta$ 
	is sufficient to ensure that it is satisfied. 

Finally, since $t \ge \frac 9 n \log\frac n \delta$ implies 
	$\mathbb{P}\left[ \|B_n\|_\infty \le 1/2 \right] \ge 1-\delta$ 
	then, with probability $1-\delta$, it holds
\[
	(2/3)^\alpha \le (1 + \|B_n\|_\infty)^{-\alpha} \le \| (\Cov+t I)^\frac{1}{2} (\Cov_n+t I)^{-\frac{1}{2}} \|^{2\alpha}_\infty 
		\le (1 - \|B_n\|_\infty)^{-\alpha} 
		\le 2^\alpha 
\]
as claimed. 
\end{proof}

\begin{lemma}[Term $\mathcal{C}$]
\label{lm:poldecay}
Let $\Cov$ be a symmetric, bounded, positive semidefinite linear operator on $\hh$. 
If $\sigma_k(\Cov) \leq f(k)$ for $k \in \N$, where $f$ is a decreasing function then, for all $t >0$ and $\alpha \ge 0$, it holds
      \begin{equation}
	      \nor{\Cov^{\alpha}(\Cov+t I)^{-\alpha}}{p} \leq ~ \inf_{0\le u\le 1} g_{u\alpha} t^{-u\alpha}
      \end{equation}
      where $g_{u\alpha} = \left(f(1)^{u \alpha p} + \int_1^\infty f(x)^{u \alpha p} dx\right)^{{1}/{p}}$.
      Furthermore, if $f(k) = g k^{-1/\gamma}$,  with $0<\gamma<1$ and $\alpha p > \gamma$, 
	      then it holds
      \begin{equation}\label{eq:decay2}
		\nor{\Cov^{\alpha}(\Cov+t I)^{-\alpha}}{p} \leq Q t^{-{\gamma}/{p}} 
      \end{equation}
	      where 
	      $Q = \left(g^{\gamma}  {\Gamma(\alpha p - \gamma)
		      \Gamma(1+\gamma)}/{\Gamma(\gamma)}\right)^{{1}/{p}}$.
\end{lemma}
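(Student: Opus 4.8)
The plan is to reduce everything to a spectral estimate for the function $h(\sigma) := \sigma^{\alpha}(\sigma+t)^{-\alpha}$ evaluated at the eigenvalues of $\Cov$, and then to control the resulting sum by an integral comparison. Since $\Cov$ is positive semidefinite, compact, and self-adjoint, it diagonalizes with eigenvalues $\sigma_1 \ge \sigma_2 \ge \cdots \ge 0$, and the operators $\Cov^\alpha$ and $(\Cov+tI)^{-\alpha}$ share these eigenvectors. Hence
\begin{equation}
	\nor{\Cov^\alpha(\Cov+tI)^{-\alpha}}{p}^p = \sum_{k\ge 1} \left(\frac{\sigma_k^\alpha}{(\sigma_k+t)^\alpha}\right)^p = \sum_{k \ge 1} \left(\frac{\sigma_k}{\sigma_k+t}\right)^{\alpha p}.
\end{equation}
The first task is to bound each summand. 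For any $0 \le u \le 1$ one has the elementary inequality $\sigma/(\sigma+t) \le (\sigma/t)^u$ (because $\sigma/(\sigma+t) \le 1$ and $\sigma/(\sigma+t) \le \sigma/t$, and interpolating between the two exponents $0$ and $1$ is exactly the role of $u$). Raising to the power $\alpha p$ gives $(\sigma_k/(\sigma_k+t))^{\alpha p} \le \sigma_k^{u\alpha p} t^{-u\alpha p}$, and substituting $\sigma_k \le f(k)$ (valid since $f$ is decreasing and dominates the eigenvalues) yields the summand bound $f(k)^{u\alpha p} t^{-u\alpha p}$.

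The second task is the integral comparison. Since $f$ is decreasing, so is $k \mapsto f(k)^{u\alpha p}$, and therefore $\sum_{k\ge 2} f(k)^{u\alpha p} \le \int_1^\infty f(x)^{u\alpha p}\, dx$ by the standard monotone-series-to-integral estimate, while the $k=1$ term contributes $f(1)^{u\alpha p}$. Combining, $\nor{\Cov^\alpha(\Cov+tI)^{-\alpha}}{p}^p \le t^{-u\alpha p}\bigl(f(1)^{u\alpha p} + \int_1^\infty f(x)^{u\alpha p}\,dx\bigr)$, and taking the $p$-th root gives $g_{u\alpha}\, t^{-u\alpha}$. Since the free parameter $u$ may be chosen anywhere in $[0,1]$, one takes the infimum, establishing the first claim. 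I would remark that the $p=\infty$ case is handled separately but identically, replacing the sum by a supremum over $k$, which is dominated by the $k=1$ term $f(1)^{u\alpha} t^{-u\alpha}$.

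For the sharp polynomial rate, I would specialize to $f(k) = g k^{-1/\gamma}$ and make the \emph{optimal} choice of $u$, which is $u=1$ here (the constraint $\alpha p > \gamma$ is precisely what makes the integral $\int_1^\infty (g x^{-1/\gamma})^{\alpha p}\,dx = g^{\alpha p}\int_1^\infty x^{-\alpha p/\gamma}\,dx$ converge, since $\alpha p/\gamma > 1$). That integral evaluates to $g^{\alpha p}\gamma/(\alpha p - \gamma)$; however, to obtain the stated constant $Q$ involving three Gamma functions rather than the crude $g_{\alpha}$, the cleaner route is to bound the sum $\sum_{k\ge 1}(\sigma_k/(\sigma_k+t))^{\alpha p}$ directly against the integral $\int_0^\infty (g x^{-1/\gamma}/(g x^{-1/\gamma}+t))^{\alpha p}\,dx$ and evaluate the latter by the substitution $y = (t/g)x^{1/\gamma}$, which turns it into a Beta integral $\int_0^\infty y^{\gamma-1}(1+y)^{-\alpha p}\,dy = B(\gamma, \alpha p - \gamma) = \Gamma(\gamma)\Gamma(\alpha p-\gamma)/\Gamma(\alpha p)$, and then absorbing the $t$-dependence from the Jacobian to produce $t^{-\gamma}$. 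The Gamma-function bookkeeping, together with a careful $\Gamma(1+\gamma)$ factor coming from the precise form of the Jacobian and the monotone comparison, is where I expect the main obstacle: matching the crude integral-comparison estimate to the exact Beta-integral value demands attention to the boundary term at $k=1$ and to the direction of the monotone inequality, so that the clean closed form $Q = (g^\gamma \Gamma(\alpha p - \gamma)\Gamma(1+\gamma)/\Gamma(\gamma))^{1/p}$ emerges rather than a looser constant.
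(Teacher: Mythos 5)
Your proof is correct, and for the second (polynomial) claim it is the same argument as the paper's: since $h(x)=x^{\alpha p}(x+t)^{-\alpha p}$ is increasing and $f$ is decreasing, $\sum_{k\ge1}h(\sigma_k)\le\sum_{k\ge1}h(f(k))\le\int_0^\infty h(f(x))\,dx$, which your substitution $y=(t/g)x^{1/\gamma}$ evaluates exactly as a Beta integral. For the first claim you take a mildly different, more elementary route: the paper never diagonalizes, but instead writes $\|C^\alpha(C+tI)^{-\alpha}\|_p=\|C(C+tI)^{-1}\|_{\alpha p}^\alpha\le\|C^u\|_{\alpha p}^\alpha\,\|C^{1-u}(C+tI)^{-1}\|_\infty^\alpha$ (H\"older for Schatten norms) and bounds the second factor by $t^{-u}$ via L\"owner ordering (Lemma~\ref{prop:Lownprop}); you perform the same split scalar-by-scalar, using $\sigma/(\sigma+t)\le\min\{1,\sigma/t\}\le(\sigma/t)^u$ on each eigenvalue, which is legitimate because every operator involved is a function of $C$, so they commute and diagonalize simultaneously. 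These are the same estimate at different levels of abstraction: yours avoids the operator-theoretic lemmas, while the paper's phrasing does not require writing out a spectral decomposition.

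Two small points. First, your remark that $u=1$ is the \emph{optimal} choice in the polynomial case is off: with $u=1$ the first bound gives $g_{\alpha}\,t^{-\alpha}$, whose $t$-exponent is strictly worse than the target $t^{-\gamma/p}$ (since $\alpha p>\gamma$ means $\alpha>\gamma/p$, and $t$ is small in the regime of interest), while the choice $u\alpha p=\gamma$ that would produce the right exponent makes $g_{u\alpha}$ diverge because $\int_1^\infty x^{-1}dx=\infty$. So the reason the direct sum-to-integral comparison is needed is the $t$-rate, not merely the constant; since you do carry out the direct computation, nothing in your proof breaks. Second, the ``Gamma bookkeeping'' you flag as the main obstacle actually resolves cleanly and in your favor: the Beta integral is $\Gamma(\gamma)\Gamma(\alpha p-\gamma)/\Gamma(\alpha p)$, and with the Jacobian factor $\gamma$ one gets $Q^p=g^\gamma\,\Gamma(1+\gamma)\Gamma(\alpha p-\gamma)/\Gamma(\alpha p)$. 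The denominator $\Gamma(\gamma)$ in the paper's stated $Q$ appears to be a typo, since the paper's own final step (``plugging in the expression of $f$ and $h$'') yields exactly the value you computed.
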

\begin{proof}
Since for any $0 \leq u \leq 1$ such that $\nor{\Cov^{u\alpha}}{p} < \infty$, it is
\[
 	\nor{\Cov^{\alpha}(\Cov+t I)^{-\alpha}}{p} = 
		\nor{\Cov(\Cov+t I)^{-1}}{\alpha p}^\alpha \leq \nor{\Cov^u}{\alpha p}^\alpha 
			\nor{\Cov^{1-u}(\Cov+t I)^{-1}}{\infty}^\alpha, 
\] 
and considering that $\Cov^{1-u} \preceq (\Cov+t I)^{1-u}$ 
	(since $\Cov\succeq 0$ and $t>0$)
	then, 
	by property 3 of Lemma~\ref{prop:Lownprop}, 
	it is $\nor{\Cov^{1-u}(\Cov+t I)^{-1}}{\infty} \leq \nor{(\Cov+t I)^{-u}}{\infty} \leq t^{-u}$.
Therefore, it follows  that
\[
	\nor{\Cov^{\alpha}(\Cov+t I)^{-\alpha}}{p} 
		\leq \nor{\Cov^u}{\alpha p}^\alpha t^{-u \alpha} 
		= \nor{\Cov^{u \alpha}}{p} t^{-u \alpha}.
%		= \nor{\Cov^u}{p} t^{-u}
\]
%for $u = \alpha u$ and $0 \leq u \leq \alpha$.

Since $f$ is decreasing, 
	it follows from the definition of $p$-Schatten norm ($\|\Cov\|^p_p = \sum_{k\ge 1}\sigma_k(\Cov)^p$) that 
\[
	\nor{\Cov^{u\alpha}}{p} = \nor{\Cov}{u\alpha p}^{u\alpha} 
		= (\sum_{n\ge 1}\sigma_k(\Cov)^{u\alpha p})^{1/p} % \le \sum_{n\ge 1} f(k)^{u\alpha}
	\le \left(f(1)^{u\alpha p} + \int_1^\infty f(x)^{u\alpha p} dx\right)^{1/p}.
\]
%Now we see that $\nor{\Cov^u}{p}$ is equal to $\sum_{n\geq1} \sigma_n(\Cov)^u$. But $\sigma_n(\Cov)^u \leq f(n)^u$ that is a decreasing positive function. Then we have $f(n)^u \leq f(1)^u + \int_1^\infty f(x)^u dx$.

Given a specific upper bound $f(k)$ of the spectrum of $\Cov$, 
	we may calculate an upper bound of $\nor{\Cov^\alpha (\Cov+t I)^{-\alpha}}{p}$. 
In particular, if  $f(k) = g k^{-1/\gamma}$, 
	then $\nor{\Cov^\alpha (\Cov+t I)^{-\alpha}}{p}^p = \sum_{k\geq1} h(\sigma_k(\Cov))$
	where $h(x) := x^{\alpha p} (x+t)^{-\alpha p}$. 
Since $h$ is increasing, 
	then $h\circ f$ is decreasing
	(the composition of increasing and decreasing functions is decreasing), 
	and $h(\sigma_k(\Cov)) \leq h(f(k))$. 
%Moreover the function $g(f(n))$ is still non decreasing because $f(n+1) \leq f(n)$ and thus $g(f(n+1)) \leq g(f(n))$. 
Therefore 
$\nor{\Cov^\alpha (\Cov+t I)^{-\alpha}}{p}^p 
	= \sum_{k\geq1} h(\sigma_k(\Cov)) \leq \sum_{k\geq1} h(f(k)) \leq \int_0^\infty h(f(x)) dx$,  
which leads to Equation~\ref{eq:decay2} by plugging in the expression of $f$ and $h$. 
%According to this reasoning we compute the remaining results of the lemma.
\end{proof}

\begin{proof}[Proof of Theorem \ref{thm:main}]
 The combination of Lemmas~\ref{lem:randdet} and~\ref{lem:concentration}.
\end{proof}

\begin{proof}[Proof of Theorem \ref{thm:eig}]
Application of Lemma~\ref{lm:poldecay} to Theorem~\ref{thm:main}.
\end{proof}

Finally, Corollary~\ref{cor:rec} is simply a particular case for the reconstruction error 
	$d_R(\S_\rho, \cdot) = d_{\alpha,p}(\S_\rho,\cdot)^2$, with $\alpha=1/2, p=2$. 

As noted in Section~\ref{sec:summary}, 
	looser bounds would be obtained if classical Bernstein inequalities in Hilbert spaces~\cite{pinelis1994optimum} were used instead. 
In particular, Lemma~\ref{lem:concentration} would result in a range for $t$ of $q n^{-r/(r+1)} \le t \le \|C\|_\infty$, 
	implying $k^* = O(n^{1/(r+1)}$) rather than $O(n^{1/r})$, 
	and thus Theorem~\ref{thm:eig}  would become 
	(for $k\ge k^*$) 
	$d_{\alpha,p}(S_\rho,S^k_n) = O(n^{-\alpha r/(r+1) + 1/(p (r+1))})$
	(compared with the sharper $O(n^{-\alpha + 1/rp})$ of Theorem~\ref{thm:eig}).
For instance, for $p=2$, $\alpha=1/2$, and a decay rate $r=2$ (as in the example of Section~\ref{sec:experiments}), it would be: 
    $d_{1/2,2}(S_\rho,S_n) = O(n^{-1/4})$ using Theorem~\ref{thm:eig}, 
    and $d_{1/2,2}(S_\rho,S_n) = O(n^{-1/6})$ using classical Bernstein inequalities.

\section{Applications of subspace learning}\label{sec:apps}

We describe next some of the main uses of subspace learning in the literature.

\subsection{Kernel PCA and embedding methods}\label{sec:kpca}
One of the main applications of subspace learning is in reducing the dimensionality of the input. 
In particular, one may find nested subspaces of dimension $1\le k\le n$ that minimize 
	the distances from the original to the projected samples. 
This procedure is known as the KarhunenÐ-Lo\`eve, PCA, or Hotelling transform~\citet{jolliffe2005principal}, 
	and has been generalized to reproducing-kernel Hilbert spaces (RKHS)~\citet{scholkopf1997kernel}. 

In particular, the above procedure amounts to computing an eigen-decomposition of the empirical covariance
	(Sec.~\ref{sec:estimates}):
	\[ \Cov_n = \displaystyle{ \sum_{i=1}^n \sigma_i u_i \otimes u_i }, \]
where the $k$-th subspace estimate is $\hat\S_n^k := \ran{\Cov^k_n} = \text{span}\{u_i : 1\le i\le k\}$. 
Note that, in the general case of kernel PCA, 
	we assume the samples $\{x_i\}_{1\le i\le n}$ to be in some Reproducing Kernel Hilbert Space (RKHS) $\H$, 
	which are obtained from the observed variables $(z_1,\dots,z_n)\in Z^n$, %\}_{i=1}^n$
	for some space $Z$,  
	through an embedding $x_i:=\phi(z_i)$. 
%Due to the possibly very high dimensionality of the 
Typically, due to the very high dimensionality of $\H$, 
	we may only have indirect information about $\phi$
	in the form a kernel function $K:Z\times Z\rightarrow\mathbb{R}$:
	a symmetric, positive definite function 
	satisfying $K(z,w)=\left<\phi(z),\phi(w)\right>_\H$~\citet{steinwart2008support}
		(for technical reasons, we also assume $K$ to be continuous).
Note that every such $K$ has a unique associated RKHS, and viceversa~\citet[p.\ 120--121]{steinwart2008support}, 
	whereas, given $K$, the embedding $\phi$ is only unique up to an inner product-preserving transformation.

%When only the kernel function $K$ is available, 
%	it may not be possible to find the basis $\{u_i\}_{i=1}^k$ 
%		of the $k$-truncated kernel PCA subspace $\hat\S^k_n$. 
%However, 
Given a point $z\in Z$, 
	we can make use of $K$ to compute the coordinates of the projection of its embedding $\phi(z)$ 
	onto % of any point $z\in Z$ onto the $k$-truncated PCA subspace 
	$\hat\S_n^k\subseteq\H$
	%, as follows. 
	by means of a simple $k$-truncated eigen-decomposition of $K_n$, 
	as described in Appendix~\ref{app:aux}.

It is easy to see that %, among all subspaces of dimension $k$, 
	the $k$-truncated kernel PCA subspace $\hat\S^k_n$ 
	minimizes the empirical reconstruction error $d_R(\hat\S_n,\hat\S)$,  
	among all subspaces $\hat\S$ of dimension $k$. 
Indeed, it is
\begin{equation}\label{eq:kpca}\begin{split}
	d_R(\hat\S_n,\hat\S) &= \mathbb{E}_{x\sim\hat\rho} \|x - P_{\hat\S}(x)\|^2_\H
		= \mathbb{E}_{x\sim\hat\rho} \left<(I-P_{\hat\S}) x, (I-P_{\hat\S})x\right>_\H \\
		&= \mathbb{E}_{x\sim\hat\rho} \left<I-P_{\hat\S}, x\otimes x\right>_{_{HS}}
		=   \left<I-P_{\hat\S}, \Cov_n \right>_{_{HS}}, 
\end{split}\end{equation}
where $\left<\cdot,\cdot\right>_{_{HS}}$ is the Hilbert-Schmidt inner product, 
form which it is easy to see that the $k$-dimensional subspace minimizing Equation~\ref{eq:kpca} 
(alternatively maximizing $<P_{\hat\S}, \Cov_n>$)
is spanned by the $k$-top eigenvectors of $\Cov_n$. 

Since we are interested in the expected $d_R(\S_\rho, \hat\S^k_n)$ 
(rather than the empirical $d_R(\hat\S_n,\hat\S)$) error of the kernel PCA estimate, 
we may obtain a learning rate for Equation~\ref{eq:kpca} 
by particularizing Theorem~\ref{thm:eig} to the reconstruction error, 
%	we obtain a result on the convergence rate of a $k$-truncated PCA estimate, 
	for all $k$ (Theorem~\ref{thm:eig}), 
	and for $k \ge k^*$ with a suitable choice of $k^*$ (Corollary~\ref{cor:rec}). 
In particular, 
	recalling that $d_R(\S_\rho, \cdot) = d_{\alpha,p}(\S_\rho,\cdot)^2$ with $\alpha=1/2$ and $p=2$, 
	and choosing a value of $k\ge k^*_n$ %(or any $k\ge k^*_n$) 
	that minimizes the bound of Theorem~\ref{thm:eig}, 
	we obtain the following result. 

\vspace*{0.1in}
\begin{corollary}[Performance of PCA / Reconstruction error]
\label{cor:rec}
Let $\Cov$ have eigenvalue decay rate of order $r$, 
	and $\hat\S^*_n$ be as in Corollary~\ref{cor:decay}.
%	and $k^*\le n$ (equivalently: $n$ such that $(\log{n}-\log\delta) n^{r-1} \ge q/9$). 
	%and $k^*_n$ be as in Theorem~\ref{thm:eig}. 
%Letting $\hat\S^*_n := \hat\S^{k^*_n}_n$, 
	Then it holds, with probability $1-\delta$, 
\[
	d_R(\S_\rho,\hat\S^*_n) = O\left(\left(\frac{\log n - \log \delta}{n}\right)^{1 - 1/r} \text{ }\right).
\]

\end{corollary}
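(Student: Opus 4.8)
The plan is to obtain this rate as a direct specialization of Corollary~\ref{cor:decay}, since all of the genuinely technical work has already been carried out in Theorem~\ref{thm:main}, Theorem~\ref{thm:eig}, and Corollary~\ref{cor:decay}. The only two ingredients needed here are the identity $d_R(\S_\rho,\cdot) = d_{1/2,2}(\S_\rho,\cdot)^2$ established in Proposition~\ref{prop:dR}, and the specific parameter choice $\alpha = 1/2$, $p = 2$.

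First I would invoke Corollary~\ref{cor:decay} with $\alpha = 1/2$ and $p = 2$. Under this choice the exponent $\alpha - \frac{1}{rp}$ collapses to $\frac{1}{2}(1 - 1/r)$, and the corollary yields, with probability $1-\delta$,
\[
	d_{1/2,2}(\S_\rho, \hat\S^*_n) \le Q' \left(\frac{9(\log n - \log\delta)}{qn}\right)^{\frac{1}{2} - \frac{1}{2r}}.
\]
Next I would square both sides and apply the identity $d_R(\S_\rho,\hat\S^*_n) = d_{1/2,2}(\S_\rho,\hat\S^*_n)^2$ from Proposition~\ref{prop:dR}. Squaring doubles the exponent to $1 - 1/r$ and replaces the leading constant $Q'$ by $Q'^2$. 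Since $Q'$, $q$, and the numerical factor $9$ are independent of $n$ and $\delta$, they are absorbed into the $O(\cdot)$ notation, leaving exactly the claimed
\[
	d_R(\S_\rho,\hat\S^*_n) = O\left(\left(\frac{\log n - \log\delta}{n}\right)^{1 - 1/r}\right).
\]

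I do not anticipate a genuine obstacle, as the estimate is a mechanical consequence of the preceding corollary. The only point requiring a moment of care is verifying that the parameter choice $\alpha = 1/2$, $p = 2$ is admissible, i.e.\ that $\alpha p = 1 > 1/r$ so that the term $\mathcal{C}$ (and hence the whole bound) remains finite; but this holds automatically for every decay rate $r > 1$, as already noted after Theorem~\ref{thm:main}. It is worth remarking in the write-up that the squaring is what turns the half-rate $\alpha - \frac{1}{rp}$ of the metric $d_{\alpha,p}$ into the full rate $1 - 1/r$ for the reconstruction error, which is the form typically reported in the literature.
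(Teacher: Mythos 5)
Your proposal is correct and matches the paper's own argument: the paper likewise obtains Corollary~\ref{cor:rec} as the particular case $\alpha=1/2$, $p=2$ of the preceding bound, using the identity $d_R(\S_\rho,\cdot) = d_{1/2,2}(\S_\rho,\cdot)^2$ from Proposition~\ref{prop:dR} and squaring, so the exponent $\frac{1}{2}-\frac{1}{2r}$ doubles to $1-1/r$. Your additional check that $\alpha p = 1 > 1/r$ is a sensible (if implicit in the paper) sanity verification.
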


\subsection{Support estimation}
The problem of support estimation consists in recovering  the support $M$ of a distribution $\rho$ on a metric space $Z$ from 
identical and independent samples  $Z_n=(z_i)_{1\leq i \leq n}$.  We briefly recall a recently  proposed   approach to 
support estimation based on subspace learning \citet{de2010spectral}, and discuss how our results specialize to this setting, 
	producing a qualitative improvement to theirs. 

Given a suitable reproducing kernel $K$ on $Z$ (with associated feature map $\phi$), %it is possible to show that 
the support $M$ can be characterized in terms of the subspace 
$S_\rho = \overline{\text{span }\phi(M)} \subseteq \hh$~\citet{de2010spectral}. 
More precisely, 
letting $d_V(x) = \|x - P_V x\|_\H$ be %the if we denote by $d_V$ is 
	the point-subspace distance to a subspace $V$, 
	it can be shown (see~\citet{de2010spectral}) that, if the kernel {\em separates} 
\footnote{A kernel is said to separate $M$ if its associated feature map $\phi$ satisfies $\phi^{-1}(\overline{\text{span }{ \phi(M)}}) = M$ (e.g.\ the Abel kernel is separating).}
% for all $x \in \overline{\text{span }{ \phi(M)}}$ such that $x = \phi(w)$,
%then  $w \in M$. The Abel kernel $e$ can be shown to be separating.} 
$M$, then it is
$$
M = \{z \in Z ~|~ d_{S_\rho}(\phi(z)) = 0 \}.
$$
This suggests an empirical estimate $\hat{M} = \{z\in Z ~|~ d_{\hat{S}}(\phi(z)) \leq \tau \}$ of $M$, 
	where $\hat S  = \overline{\text{span }{\phi(Z_n)}}$, and $\tau > 0$. 
With this choice, 
	almost sure convergence $\lim_{n\to\infty} d_H(M,\hat M) = 0$ in the Hausdorff distance~\citet{beer1993topologies}
%	, %where $d_H$ 
is related to the convergence of $\hat S$ to $S_\rho$~\citet{de2010spectral}. 
More precisely, 
	if the eigenfunctions of the covariance operator $\Cov = \expect{z\sim\rho}{\phi(z)\otimes\phi(z)}$ %in $\hh$ 
	are uniformly bounded, 
	then it suffices for Hausdorff convergence 
	to bound from above $d_{\frac{r-1}{2r},\infty}$
	(where $r > 1$ is the eigenvalue decay rate of $\Cov$). 
The following results specializes  Corollary~\ref{cor:decay} to this setting.

\vspace*{0.08in}
\begin{corollary}[Performance of set learning]
\label{cor:reg}
If $0 \leq \alpha \leq \frac{1}{2}$, then
%	and $k^*\le n$ (equivalently: $n$ such that $(\log{n}-\log\delta) n^{r-1} \ge q/9$). 
	%and $k^*_n$ be as in Theorem~\ref{thm:eig}. 
%Letting $\hat\S^*_n := \hat\S^{k^*_n}_n$, 
	it holds, with probability $1-\delta$, 
\[
	d_{\alpha,\infty}(\S_\rho,\hat\S^*_n) = O\left(\left(\frac{\log n - \log\delta}{n}\right)^{\alpha}\right).
\]
\end{corollary}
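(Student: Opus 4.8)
The plan is to recognize this statement as the specialization of Corollary~\ref{cor:decay} (equivalently, the plateau regime of Theorem~\ref{thm:eig}) to the operator-norm case $p=\infty$. Formally setting $p=\infty$ in Corollary~\ref{cor:decay} turns the exponent $\alpha-\tfrac{1}{rp}$ into exactly $\alpha$, which already yields the claimed rate $\left(\tfrac{\log n-\log\delta}{n}\right)^{\alpha}$. So at the level of the exponent there is nothing to do beyond substitution; the real content of the proof is to make that substitution legitimate by ensuring the constant stays finite.

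First I would dispose of the degenerate case $\alpha=0$: here $d_{0,\infty}(\S_\rho,\hat\S^*_n)=\nor{P_{\S_\rho}-P_{\hat\S^*_n}}{\infty}\le 1$, since $\hat\S^*_n\subseteq\S_\rho$ makes the difference of the two projections itself a projection, and the claimed bound is $O(1)$, so it holds trivially. For $\alpha>0$ I would invoke Theorem~\ref{thm:main} with $p=\infty$, which gives, on the $1-\delta$ event and uniformly in $k$,
\[
  d_{\alpha,\infty}(\S_\rho,\hat\S^k_n)\;\le\;3\,t_k^{\alpha}\,\nor{\Cov^{\alpha}(\Cov+t_k I)^{-\alpha}}{\infty},
\]
with $t_k=\max\{\sigma_k,\tfrac{9}{n}\log\tfrac{n}{\delta}\}$.

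The key point — and the one place where $p=\infty$ must be handled by hand rather than by plugging into the formula for $Q'$ — is the deterministic factor $\nor{\Cov^{\alpha}(\Cov+t_k I)^{-\alpha}}{\infty}$. The constant $Q'$ of Theorem~\ref{thm:eig} contains $\Gamma(\alpha p-1/r)$, which diverges as $p\to\infty$, so the bound cannot be obtained by taking a limit in that expression. Instead I would bound this term directly: $\Cov^{\alpha}(\Cov+t_kI)^{-\alpha}$ is diagonal in the eigenbasis of $\Cov$ with eigenvalues $\left(\sigma_j/(\sigma_j+t_k)\right)^{\alpha}\in[0,1]$, so its operator norm is at most $1$. (This is precisely why the requirement $\alpha p>1/r$ of Lemma~\ref{lm:poldecay} is automatically met for $p=\infty$.) Hence $d_{\alpha,\infty}(\S_\rho,\hat\S^k_n)\le 3\,t_k^{\alpha}$ with an absolute constant, and the divergent $\Gamma$-factor never appears.

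It remains to evaluate $t_k$ at the chosen truncation level. Since the eigenvalues $\sigma_k$ are non-increasing, so is $t_k$; hence for $\hat\S^*_n=\hat\S^k_n$ with $k\ge k^*_n$ we have $t_k\le t_{k^*_n}$. The eigenvalue decay then gives $\sigma_{k^*_n}\le Q\,(k^*_n)^{-r}=\tfrac{Q}{q}\cdot\tfrac{9(\log n-\log\delta)}{n}$, whence $t_{k^*_n}\le\max\{1,Q/q\}\cdot\tfrac{9(\log n-\log\delta)}{n}$. Combining, $d_{\alpha,\infty}(\S_\rho,\hat\S^*_n)\le 3\,t_{k^*_n}^{\alpha}=O\!\left(\left(\tfrac{\log n-\log\delta}{n}\right)^{\alpha}\right)$, as claimed. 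I expect the only genuine obstacle to be this constant: the naive $p\to\infty$ limit in $Q'$ is invalid, and the remedy is the direct operator-norm estimate $\nor{\Cov^{\alpha}(\Cov+t_kI)^{-\alpha}}{\infty}\le 1$; everything else is substitution together with the monotonicity of $t_k$ in $k$.
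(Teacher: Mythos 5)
Your proof is correct, and it is in fact more careful than the paper's own treatment. The paper gives no explicit proof of Corollary~\ref{cor:reg}: it is presented simply as the specialization of Corollary~\ref{cor:decay} (hence of Theorem~\ref{thm:eig}) to $p=\infty$, with the exponent $\alpha-\tfrac{1}{rp}$ formally becoming $\alpha$. You correctly identify the weakness of that reading: the constant $Q' = 3\left(Q^{1/r}\Gamma(\alpha p-1/r)\Gamma(1+1/r)/\Gamma(1/r)\right)^{1/p}$ diverges as $p\to\infty$ (by Stirling, $\Gamma(\alpha p-1/r)^{1/p}$ grows like $p^{\alpha}$), so the $p=\infty$ statement cannot be obtained by substitution or by a limit in the finite-$p$ bound; likewise Lemma~\ref{lm:poldecay}, whose proof sums $p$-th powers of eigenvalues, does not directly supply the needed factor at $p=\infty$. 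Your repair --- going back to Theorem~\ref{thm:main} with $p=\infty$ and bounding the deterministic term by $\nor{\Cov^{\alpha}(\Cov+t_k I)^{-\alpha}}{\infty}\le 1$, since its eigenvalues are $\left(\sigma_j/(\sigma_j+t_k)\right)^{\alpha}\in[0,1]$ --- is exactly the right substitute, and it gives $d_{\alpha,\infty}(\S_\rho,\hat\S^k_n)\le 3\,t_k^{\alpha}$ with an absolute constant. The remaining bookkeeping (monotonicity of $t_k$, and $\sigma_k\le Qk^{-r}\le Q(k^*_n)^{-r}=(Q/q)\cdot\tfrac{9\log(n/\delta)}{n}$ for $k\ge k^*_n$) coincides with what the paper's chain of results does, and your separate treatment of $\alpha=0$ is a harmless extra precaution. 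In short: you follow the same overall route as the paper (main theorem plus evaluation of $t_k$ at the plateau threshold), but your direct operator-norm estimate replaces, and effectively repairs, the paper's implicit appeal to a constant that is vacuous at $p=\infty$.
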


\begin{figure}[h]
  \begin{minipage}[c]{0.5\textwidth}
    \includegraphics[width=6cm]{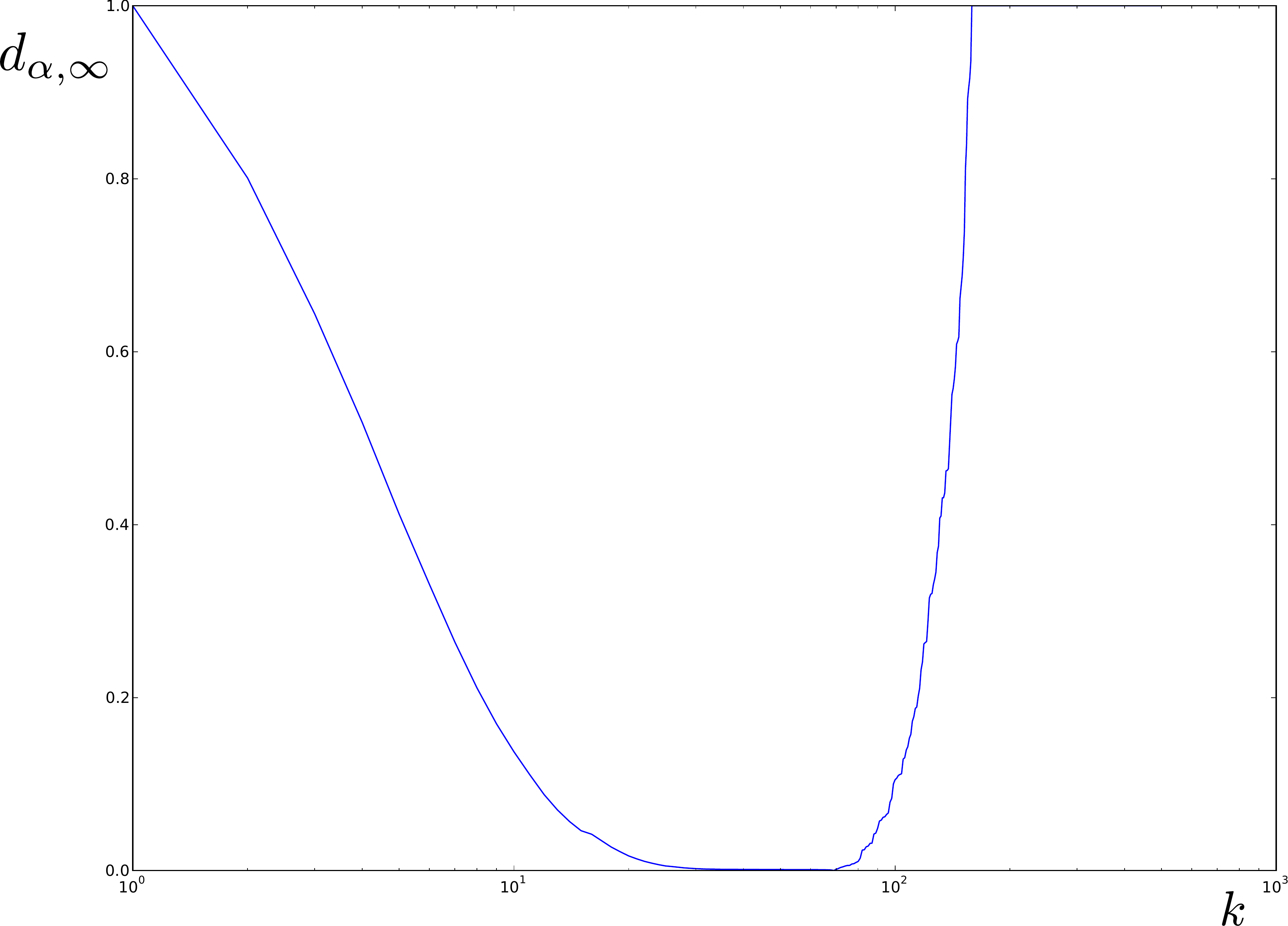}
  \end{minipage}\hfill
  \begin{minipage}[c]{0.5\textwidth}
\caption{\label{fig:numerr} The figure shows the experimental behavior of the distance $d_{\alpha,\infty}(\hat{S}^k, S_\rho)$ between the empirical and the real subspaces, with respect to the regularization parameter. The setting is the one of section \ref{sec:experiments}. Here the real subspace is analytically computed, while the empirical one is computed on a dataset with $n=1000$ and $32$bit floating point precision. Note the numerical instability as $k$ tends to $1000$.
}  \end{minipage}
\end{figure}

Letting $\alpha=\frac{r-1}{2r}$ above yields a high probability bound of order $O\left(n^{-\frac{r-1}{2r}}\right)$
	(up to logarithmic factors), 
	which is considerably sharper than the bound $O\left(n^{-\frac{r-1}{2(3r-1)}}\right)$
	found in~\cite{de2012learning} (Theorem~7). 
Note that these are upper bounds for the best possible choice of $k$ (which minimizes the bound). 
While the optima of both bounds vanish with $n\to\infty$, their behavior is qualitatively different. 
In particular, the bound of~\cite{de2012learning} is U-shaped, and diverges for $k=n$, 
	while ours is L-shaped (no trade-off), and thus also convergent for $k=n$. 
Therefore, when compared with~\cite{de2012learning},
	our results suggest that no regularization is required from a statistical point of view 
	though, as clarified in the following remark, it may be needed for purposes of numerical stability.

\begin{remark}
While, as proven in Corollary~\ref{cor:reg}, regularization is not needed from a statistical perspective, 
	it can play a role in ensuring numerical stability in practice.
Indeed, in order to find $\hat M$,
	 we compute $d_{\hat S}(\phi(z))$ with $z\in Z$. 
Using the reproducing property of $K$, it can be shown that, 
	for $z \in Z$, 
	it is $d_{\hat S^k}(\phi(z))= K(z,z) - \scal{t_z}{(\hat K_n^k)^\dagger t_z}$ where $(t_z)_i = K(z,z_i)$, 
	$\hat K_n$ is the Gram matrix $({\hat K_n})_{ij} = K(z_i,z_j)$, 
	$\hat K_n^k$ is the rank-$k$ approximation of $\hat K_n$, 
	and $(\hat K_n^k)^\dagger$ is the pseudo-inverse of $\hat K_n^k$. 
The computation of $\hat M$ therefore requires a matrix inversion, 
	which is prone to instability for high condition numbers. 
Figure~\ref{fig:numerr} shows the behavior of the error 
that results from replacing $\hat S$ by its $k$-truncated approximation $\hat S^k$. 
For large values of $k$, the small eigenvalues of $\hat S$ are used in the inversion, 
	leading to numerical instability.
\end{remark}

\section{Experiments}\label{sec:experiments}
%\vspace*{5in}
\begin{figure}[h]
\raisebox{0.09in}{\includegraphics[width=6.4cm]{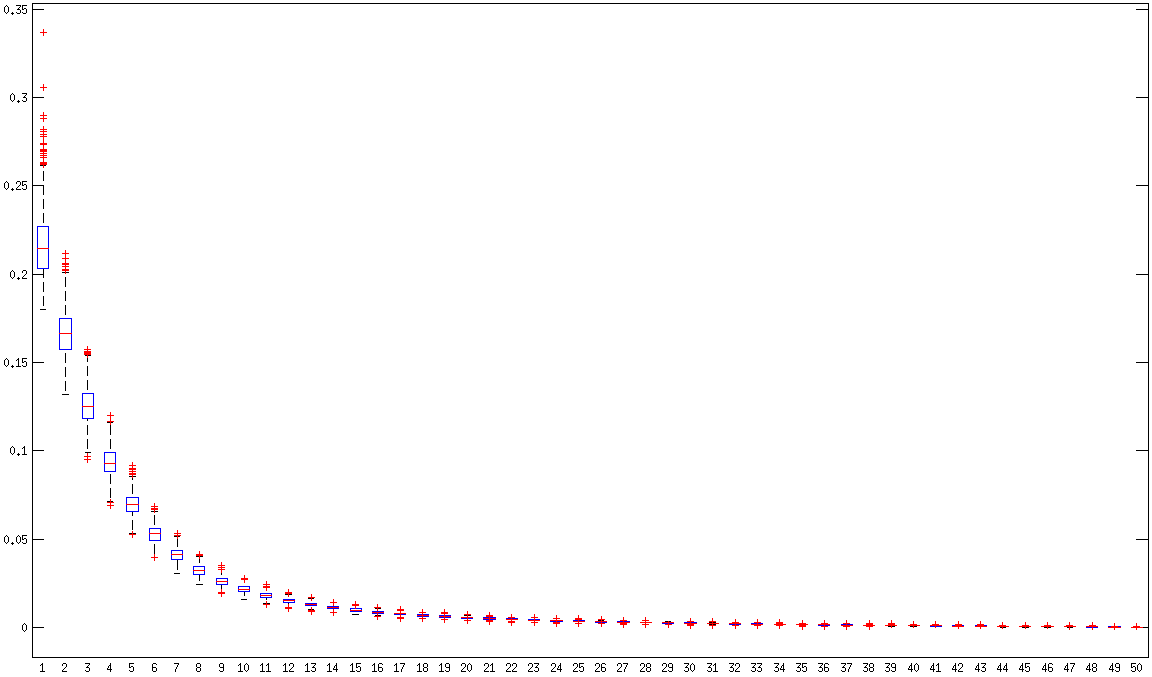}}~~~~~~
\includegraphics[width=7.1cm]{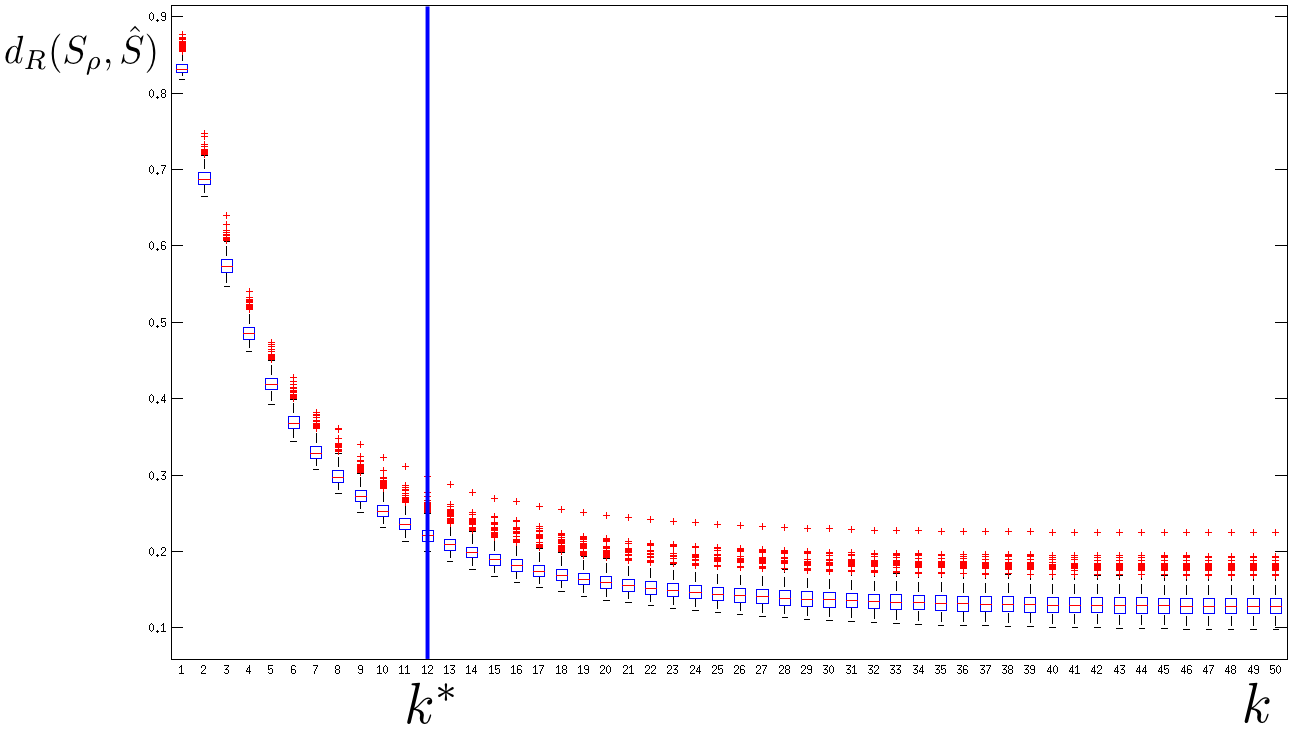}
\caption{\label{fig:box}
	The spectrum of the empirical covariance (left), and %, for $n=1000$, $1000$ trials. 
	the expected distance from a random sample to the 
		empirical $k$-truncated kernel-PCA subspace estimate (right), 
		as a function of $k$ ($n=1000$, $1000$ trials shown in a boxplot). 
	Our predicted plateau threshold $k^*_n$ (Theorem~\ref{thm:eig}) is a good estimate of 
		the value $k$ past which the distance stabilizes. 
%Boxplot of the analytically computed reconstruction error and for $1\leq k \leq 50$.}
}
\end{figure}

In order to validate  our analysis empirically, 
	we consider the following experiment. % on a known distribution,  whose covariance we compute analytically. 
Let $\rho$ be a uniform one-dimensional distribution in the unit interval. 
We embed $\rho$ into a reproducing-kernel Hilbert space $\H$ 
	using the exponential of the $\ell_1$ distance ($k(u,v)=\exp\{-\|u-v\|_1\}$) as kernel. % $k(x,y)=\exp\{-\|x-y\|_1\}$). 
%Figure~\ref{fig:box} shows the spectrum of the empirical covariance of $\rho$ in $\H$. 
Given $n$ samples drawn from $\rho$, we compute its empirical covariance in $\H$ 
	(whose spectrum is plotted in Figure~\ref{fig:box} (left)), 
	and truncate its eigen-decomposition to obtain a subspace estimate $\hat\S_n^k$, 
	as described in Section~\ref{sec:estimates}, and in Appendix~\ref{app:aux}.  
%	$\frac{2\gamma}{\alpha^2 + \gamma^2}  ~~ \frac{2\alpha\gamma}{\alpha^2-\gamma^2} = \tan \alpha,~\alpha\in\R$
%\left\{\left(\frac{2\gamma}{\alpha^2 + \gamma^2}, f_\alpha\right)~|~\frac{2\alpha\gamma}{\alpha^2-\gamma^2} = \tan \alpha,~\alpha\in\R\right\}
%\end{equation}
%\begin{equation}
%f_\alpha((x,y)) = \frac{2e^{-\gamma|y|}}{\alpha^2 + \gamma^2 + 2\gamma}\left\{\begin{array}{lr}
%\alpha e^{-\gamma|x|} & x < 0 \\
%\alpha \cos(\alpha x) + \gamma \sin(\alpha x) & 0 \leq x \leq 1\\
%\gamma \sin \alpha~ e^{-\gamma |x-1|} & x > 1 

Figure~\ref{fig:box} (right) is a box plot of reconstruction error 
	$d_R(\S_\rho, \hat\S_n^k)$ associated with the $k$-truncated kernel-PCA estimate $\hat\S_n^k$
	(the expected distance in $\H$ of samples to $\hat\S_n^k$), 
%	where $\hat\S_n^k$ is the $k$-truncated kernel-PCA estimate, 
	with $n=1000$ and varying $k$. 
%Note that, 
While $d_R$ is computed analytically in this example, 
	and $\S_\rho$ is fixed, 
	the estimate $\hat\S_n^k$ is a random variable, 
	and hence the variability in the graph. 
Notice from the figure that, 
	as pointed out in~\citet{blanchard2007statistical} and discussed in Section~\ref{sec:discussion}, 
	the reconstruction error $d_R(\S_\rho,\hat\S_n^k)$ is always a non-increasing function of $k$, 
	due to the fact that the kernel-PCA estimates are nested: $\hat\S_n^k \subset \hat\S_n^{k'}$ for $k<k'$
	(see Section~\ref{sec:estimates}). 
The graph is highly concentrated around a curve 
	with a steep intial drop, 
	until reaching some sufficiently high $k$, 
	%the error plateaus at some sufficiently
	past which the reconstruction (pseudo) distance becomes stable, 
	and does not vanish. 
%The error never vanishes in our experiments in high-dimensional spaces. 
In our experiments, this behavior is typical for the reconstruction distance and high-dimensional problems.

Due to the simple form of this example, we are able to  compute analytically the spectrum of the 
	true covariance $\Cov$. 
In this case, 
	the eigenvalues of $\Cov$ 
%	have the form $2\gamma / (\beta_j^2 + \gamma^2)$, 
%	where $\beta_j$ is the $j$-th positive root of $2\beta / (\beta^2-\gamma^2) - \tan\beta=0$, 
%or equivalently, % to saying that 
%	the eigenvalues 
	decay as $2\gamma / ((k\pi)^2 + \gamma^2)$, with $k\in\mathbb{N}$, 
	and therefore they have a polynomial decay rate $r=2$ (see Section~\ref{sec:summary}). 
Given the known spectrum decay rate, 
	we can estimate the plateau threshold $k=k^*_n$ in the bound of Theorem~\ref{thm:eig}, 
	which can be seen to be a good approximation 
 	of the observed start of a plateau in $d_R(\S_\rho,\hat\S^k_n)$ (Figure~\ref{fig:box}, right). 
Notice that our bound for this case (Corollary~\ref{cor:rec}) 
	similarly predicts 
	a steep performance drop until the threshold $k=k^*_n$ 
	(indicated in the figure by the vertical blue line), 
	and a plateau afterwards.

\section{Discussion}\label{sec:discussion}% and Comparison with Previous Work}
Figure~\ref{fig:bounds} shows a comparison 
	of our learning rates %of Corollary~\ref{cor:rec}
	with existing rates in the 
	literature~\citet{blanchard2007statistical,shawe2005eigenspectrum}.
The plot shows the polynomial decay rate $c$ of the bound $d_R(\S_\rho,\hat\S^k_n) = O(n^{-c})$, 
	as a function of the eigenvalue decay rate $r$ of the covariance $\Cov$, 
	computed at the best value $k^*_n$ (which minimizes the bound). % a specific choice of $k=k(n)$. 
%For our bounds, 
%	we use the predicted plateau threshold $k^*_n$ of Theorem~\ref{thm:eig}. 
%For previous bounds, 
%	we use the best value of $k^*_n$ (which minimizes their bounds). 
\begin{figure}[h]
\centering
\includegraphics[height=5cm]{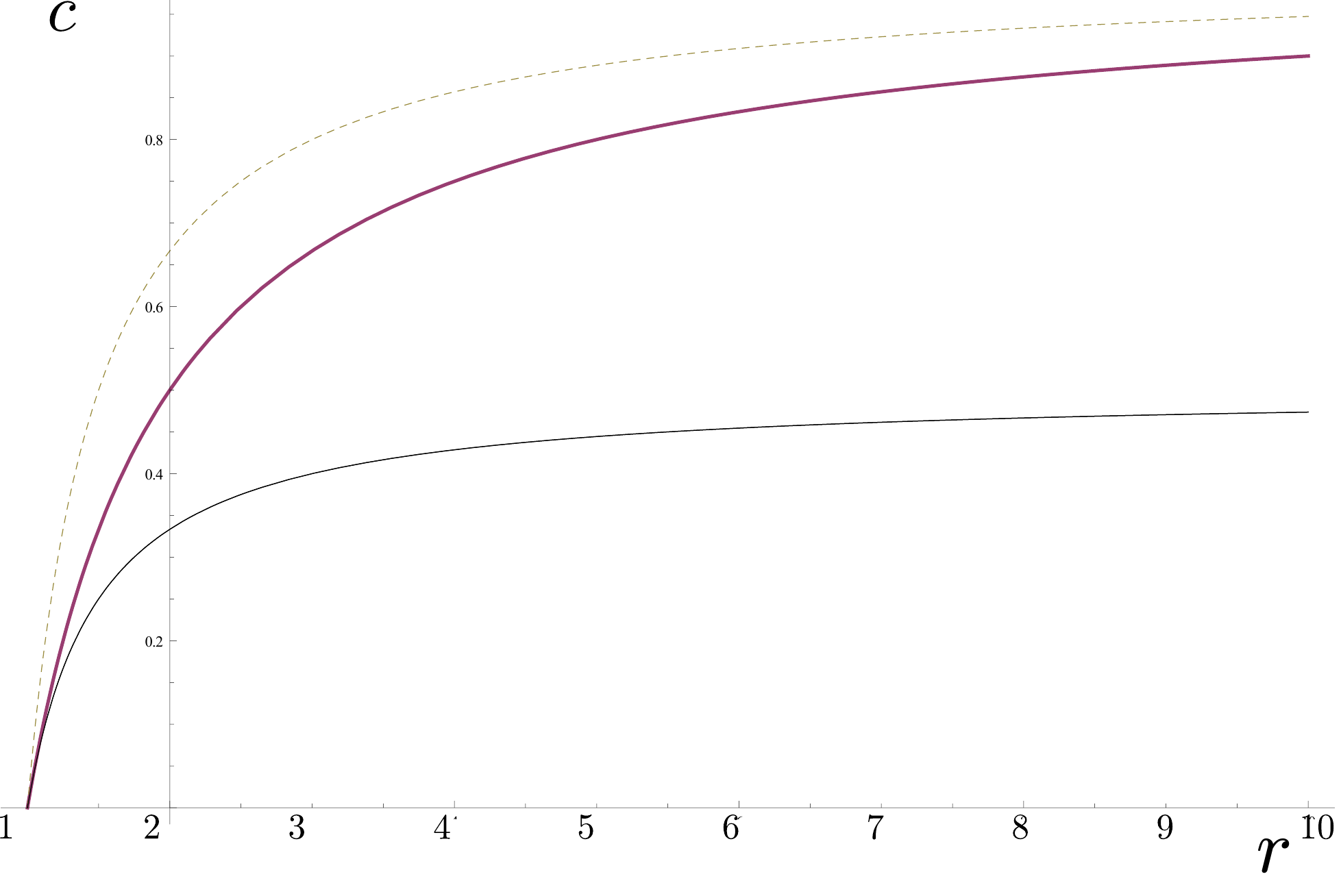}
\caption{\label{fig:bounds}
	%Assuming a decay of $O(n^{-b})$ for the eigenvalues $\sigma_n(T)$, the considered bounds are of the form $O(n^{-c})$ modulo logarithmic factors. Here the dependency of $c$ from $b$ is shown, from top to bottom, in the bound of \citet{blanchard2007statistical} with the best fourth moment condition; the bound of Corollary \ref{cor:main-rec}; the bound of \citet{shawe2005eigenspectrum}; and finally the bound of \citet{blanchard2007statistical} with the worst condition of the fourth moment of $\rho$
	Known upper bounds for the polynomial decay rate $c$ 
		(for the best choice of $k$), 
		for the expected distance from a random sample to the 
		empirical $k$-truncated kernel-PCA estimate, 
		as a function of the 
		covariance eigenvalue decay rate (higher is better). 
	Our bound (purple line), consistently outperforms 
		previous ones (\citet{shawe2005eigenspectrum} black line). 
	The top (\citet{blanchard2007statistical} dashed) line, has significantly stronger assumptions, 
	and is only included for completeness. 
}
\end{figure}

The rate exponent $c$, under a polynomial eigenvalue decay assumption for $\Cov$, is $c = \frac{s(r-1)}{r - s + sr}$ for \citet{blanchard2007statistical} and $c = \frac{r-1}{2r-1}$ for
\citet{shawe2005eigenspectrum}, where $s$ is related to the fourth moment. %the exponent of the eigenvalue decay of the fourth order moment. 
Note that, among the two (purple and black) that operate under the same assumptions, 
	our bound (purple line) is the best by a wide margin. 
The top, best performing, dashed line~\citet{blanchard2007statistical} 
	is obtained for the best possible fourth-order moment constraint $s=2r$, % is the best possible fourth moment decay ($s=2r$), 
	and is therefore not a fair comparison. 		
However, it is worth noting that our bounds perform almost as well
	as the most restrictive one, 
	even when we do not include any fourth-order moment constraints.

\noindent{\bf Choice of truncation parameter $k$}. 
Since, as pointed out in Section~\ref{sec:estimates}, 
	the subspace estimates $\hat\S^k_n$ are nested for increasing $k$
	(i.e.\ $\hat\S^k_n\subseteq\hat\S^{k'}_n$ for $k<k'$), 
	the distance $d_{\alpha,p}(\S_\rho,\hat\S^k_n)$, 
	and in particular the reconstruction error $d_R(\S_\rho,\hat\S^k_n)$, 
	is a non-increasing function of $k$. 
As has been previously discussed~\citet{blanchard2007statistical}, 
	this suggests that there is no tradeoff in the choice of $k$. 
Indeed, the fact that the estimates $\hat\S^k_n$ become increasing close to $\S_\rho$ as $k$ increases 
indicates that the best choice is the highest: $k=n$. 

Interestingly, however, both in practice (Section~\ref{sec:experiments}), 
	and in theory (Section~\ref{sec:summary}), 
	we observe that a typical behavior for the subspace learning problem in high dimensions 
	(e.g.\ kernel PCA)
	is that there is a certain value of $k=k^*_n$, past which performance plateaus. 
For problems such as spectral embedding methods~\citet{tenenbaum2000global,donoho2003hessian,weinberger2006unsupervised}, 
	in which a degree of dimensionality reduction is desirable, 
	producing an estimate $\hat\S^{k}_n$ where $k$ is close to the plateau threshold 
	may be a natural parameter choice: 
	it leads to an estimate of the lowest dimension ($k=k^*_n$),  
	whose distance to the true $\S_\rho$ is almost as low as the best-performing one ($k=n$). 
%Note that this is not a formal argument, 
%	in the sense that a specific application may have its own way 
%	to penalize the high dimensionality $k$ of an estimate. 

%\subsection{Comparison with the literature}
%Upper bounds for the reconstruction error for the empirical subspace estimated using the centered and non-centered PCA algorithm in the finite dimensional and infinite dimensional case is widely studied in the statistical literature see for example \citet{blanchard2007statistical, shawe2005eigenspectrum} and references therein.
%In figure \ref{fig:bounds} we compare the asymptotic behaviour of our learning rate with the two bounds for the non-centered case presented in \citet{blanchard2007statistical} and the bound presented in  \citet{shawe2005eigenspectrum} in the case of a polynomial decay for the eigenvalues of the second order moment $T$. Note that the minimum of the two bounds in \citet{blanchard2007statistical} is used.

%Finally is important to note that the competing bounds we are considering are valid only for the reconstruction error while the ones in \citet{blanchard2007statistical} depend on an additional assumption on the fourth moment of the distribution $\rho$.

{\small
\bibliographystyle{plainnat}        
\bibliography{subspace}
}

\appendix

\section{Concentration bounds on compact operators}\label{app:concentration}

\begin{theorem}\label{thm:tropp-ineq}
\emph{[Tropp's concentration inequality~\citet[Theorem 7.3.1]{tropp2012user} on the operator norm]}
Let $(Z_i)_{1\leq i \leq n}$ be independent copies of the random variable $Z$ 
	with values in the space of bounded self-adjoint operators ${\cal B}(\hh)$ over a separable Hilbert space $\hh$. 
Define $T := \expect{}{Z}$, and let there be $S\in{\cal S}(\hh)$ such that $\expect{}{(Z-T)^2} \leq S$, 
	and a finite number $R$ such that $\nor{Z}{\infty} \leq R$ almost everywhere. 
%\begin{equation}
%\expect{}{Z} = T, \quad \expect{}{(Z-T)^2} \leq S, \; \textrm{and} \; \nor{Z}{\infty} \leq R \quad \text{almost everywhere}
%\end{equation}
%with $T$ and $S$ compact symmetric linear operators, moreover we introduce
Define the quantities $d := \nor{S}{1} / \nor{S}{\infty}$ and $\sigma^2 := \nor{S}{\infty}$. 
Then, for $0 < \delta \leq d$, it holds
\[
	\mathbb{P}\left\{\nor{\frac{1}{n}\sum_{i=1}^n Z_i \,-\, T}{\infty} \leq \frac{\beta R}{n}+\sqrt{\frac{3\beta \sigma^2}{n}}\right\} \leq 1-\delta
\]
where $\beta := \frac{2}{3}\log\frac{4d}{\delta}$.
\end{theorem}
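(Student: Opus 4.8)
The plan is to obtain the stated Bernstein-type error bar as a direct corollary of the intrinsic-dimension matrix Bernstein tail inequality of \citet[Theorem 7.3.1]{tropp2012user}, whose role is precisely to supply an exponential tail bound in which the ambient (possibly infinite) dimension is replaced by the intrinsic dimension $d=\nor{S}{1}/\nor{S}{\infty}$. Since the tail bound itself is the cited result, the genuine content of the argument is the conversion from the tail-probability form to the high-probability form written in the statement.

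First I would center and normalise the summands, setting $Y_i := \frac{1}{n}(Z_i - T)$, so that $\expect{}{Y_i}=0$ and $\sum_{i=1}^n Y_i = \frac 1 n \sum_{i=1}^n Z_i - T$ is exactly the quantity whose norm must be controlled. I would then verify the two hypotheses of Tropp's inequality in this normalisation: the uniform bound $\lambda_{\max}(Y_i)\le R/n$ (from $\nor{Z}{\infty}\le R$ together with $T\succeq 0$, up to a constant in the purely self-adjoint case), and a semidefinite control of the matrix variance, $\sum_i \expect{}{Y_i^2} = \frac 1 n \expect{}{(Z-T)^2} \preceq \frac 1 n S =: W$, using that the $Z_i$ are i.i.d.\ copies of $Z$. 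Because Tropp's theorem accepts an arbitrary semidefinite upper bound on the variance, I can work with $W$ directly, so that $\nor{W}{\infty} = \sigma^2/n$ and, since $S\succeq 0$, $\mathrm{intdim}(W)=\tr(W)/\nor{W}{\infty} = \nor{S}{1}/\nor{S}{\infty} = d$.

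With these substitutions Tropp's theorem yields, for admissible thresholds $\tau$,
\[
	\prob{}{\lambda_{\max}\left(\frac 1 n \sum_{i} Z_i - T\right) \ge \tau}
		\le 4d\,\exp\left(\frac{-\tau^2/2}{\sigma^2/n + R\tau/(3n)}\right),
\]
and applying the same estimate to $-Z$ promotes $\lambda_{\max}$ to the operator norm $\nor{\cdot}{\infty}$. The core computation is then to invert this bound: setting the right-hand side equal to $\delta$ and writing $\beta := \frac 2 3 \log\frac{4d}{\delta}$, the exponent condition reduces to the quadratic $\tau^2 - \frac{\beta R}{n}\tau - \frac{3\beta\sigma^2}{n}=0$. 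Taking its positive root and using $\sqrt{a+b}\le\sqrt a + \sqrt b$ gives $\tau \le \frac{\beta R}{n} + \sqrt{\frac{3\beta\sigma^2}{n}}$, which is exactly the claimed bound; the hypothesis $0<\delta\le d$ guarantees $\log\frac{4d}{\delta}\ge \log 4>0$, hence $\beta>0$, and keeps $\tau$ within the valid range of the tail estimate.

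The main obstacle I anticipate is bookkeeping rather than conceptual. Intrinsic dimension is \emph{not} monotone under the Löwner order, so the semidefinite proxy $W$ must be fed into the version of the inequality that explicitly admits a semidefinite upper bound on the variance, rather than appealing to monotonicity of $\mathrm{intdim}$. One must also track the constant incurred in passing from $\lambda_{\max}$ to $\nor{\cdot}{\infty}$ (depending on whether one invokes the $\lambda_{\max}$ form and unions over $\pm Z$, or a norm form of Tropp's bound directly), and verify that the root solving the quadratic indeed satisfies the lower-bound constraint $\tau \ge \sqrt{\sigma^2/n} + R/(3n)$ under which the cited tail bound holds. None of these steps is deep, but each must be executed carefully to land on the precise constants in the statement.
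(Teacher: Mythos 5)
The paper does not actually prove this statement: it is an imported result, quoted (in an adapted, high-probability form) from \citet[Theorem 7.3.1]{tropp2012user} ``for completeness,'' so there is no internal proof to compare against. Your proposal supplies exactly the conversion that the paper delegates to the citation, and it is correct in substance: applying the intrinsic-dimension matrix Bernstein inequality to the centered, normalized summands $Y_i = \frac{1}{n}(Z_i - T)$ gives the variance proxy $W = S/n$ with $\nor{W}{\infty} = \sigma^2/n$ and $\mathrm{intdim}(W) = \nor{S}{1}/\nor{S}{\infty} = d$; inverting the tail $4d\exp\bigl(-\frac{\tau^2/2}{\sigma^2/n + R\tau/(3n)}\bigr) = \delta$ yields the quadratic $\tau^2 - \frac{\beta R}{n}\tau - \frac{3\beta\sigma^2}{n} = 0$, whose positive root is bounded by $\frac{\beta R}{n} + \sqrt{3\beta\sigma^2/n}$ via $\sqrt{a+b}\le\sqrt a+\sqrt b$; and the hypothesis $\delta \le d$ gives $\beta \ge \frac{2}{3}\log 4$, which places this threshold above Tropp's validity floor $\sqrt{\sigma^2/n} + R/(3n)$, as you note must be checked. (Your derivation also produces the inequality in the correct direction, $\ge 1-\delta$; the ``$\le$'' in the paper's statement is a typo.)

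Two caveats, both of which you partly anticipate but should resolve rather than defer. First, your uniform bound $\lambda_{\max}(Y_i)\le R/n$ invokes $T\succeq 0$, which is not among the hypotheses: $Z$ is only assumed self-adjoint, so in general centering costs a factor of two, $\nor{Z-T}{\infty}\le 2R$, and the stated constant would degrade to $2\beta R/n$. To keep the constant $R$ one must assume $Z\succeq 0$ almost surely, under which $\lambda_{\max}(Z-T)\le\lambda_{\max}(Z)\le R$ and $\lambda_{\min}(Z-T)\ge -\nor{T}{\infty}\ge -R$; this is harmless for the paper, since its only use of the theorem (Lemma~\ref{lem:concentration}) has $Z = U\otimes U\succeq 0$, but the theorem as stated does not license your step. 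Second, Tropp's inequality is proved for finite-dimensional matrices, whereas here $Z$ takes values in ${\cal B}(\hh)$ with $\hh$ a separable, possibly infinite-dimensional Hilbert space; a complete proof needs a finite-rank approximation or limiting argument (or a reference to a Hilbert-space extension), which is made possible precisely because the intrinsic dimension $d$ is finite whenever $S$ is trace class, so no ambient dimension enters the bound. Neither the paper nor your proposal addresses this point explicitly.
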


\begin{theorem}\label{thm:pinelis-ineq}
\emph{[Pinelis inequality~\citet{pinelis1994optimum} on the Hilbert-Schmidt norm]}
Let $(Z_i)_{1\leq i \leq n}$ be independent copies of the random variable $Z$ 
	with values in the space of bounded operators ${\cal B}(\hh)$ over a separable Hilbert space $\hh$. 
Define $T := \expect{}{Z}$, and let there be $S\in{\cal S}(\hh)$ such that $\expect{}{(Z-T)^2} \leq S$, 
	and a finite number $R$ such that $\nor{Z}{2} \leq R$ almost everywhere. 
%\begin{equation}
%\expect{}{Z} = T, \quad \expect{}{(Z-T)^2} \leq S, \; \textrm{and} \; \nor{Z}{\infty} \leq R \quad \text{almost everywhere}
%\end{equation}
%with $T$ and $S$ compact symmetric linear operators, moreover we introduce
Define the quantity $\sigma^2 := \tr S$. 
Then, for $\delta > 0$, it holds
\[
	\mathbb{P}\left\{\nor{\frac{1}{n}\sum_{i=1}^n Z_i \,-\, T}{2} \leq \frac{\beta R}{n}+\sqrt{\frac{3\beta \sigma^2}{n}}\right\} \leq 1-\delta
\]
where $\beta := \frac{2}{3}\log\frac{1}{\delta}$.

Note that this theorem corresponds to the classical Bernstein inequality \citet{bernstein1946} when $\hh = \R$.
\end{theorem}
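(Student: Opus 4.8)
The plan is to recognize that the statement is a vector-valued Bernstein inequality in disguise, and to obtain it by specializing Pinelis' martingale bound to the Hilbert-space case. The starting observation is that the Hilbert--Schmidt operators $\BBH$ form a separable Hilbert space under $\scal{A}{B}_{HS}=\tr(A^*B)$, with $\nor{\cdot}{2}$ the associated norm. Hence $(Z_i-T)_{1\le i\le n}$ is an i.i.d.\ sequence of centered random vectors in a Hilbert space, and $\frac1n\sum_i Z_i - T$ is their normalized sum. In this language the quantity $\sigma^2=\tr S$ is precisely a bound on the total weak variance: since $\expect{}{\nor{Z-T}{2}^2}=\tr\expect{}{(Z-T)^2}\le \tr S$ (using $\nor{A}{2}^2=\tr(A^*A)$ for self-adjoint $Z$), one has $\sum_{i=1}^n\expect{}{\nor{Z_i-T}{2}^2}\le n\sigma^2$. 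This explains why the controlling variance here is $\tr S$ (a single scalar, the Hilbert-norm variance) rather than the operator norm $\nor{S}{\infty}$ appearing in the operator-norm inequality of Theorem~\ref{thm:tropp-ineq}.

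First I would set up the exponential-moment (Chernoff) method on the scalar $\nor{M_k}{2}$, where $M_k:=\sum_{i\le k}(Z_i-T)$ is the Hilbert-space partial-sum martingale. The core step is to establish an exponential supermartingale: for a centered increment $\xi$ independent of $u$ and $\lambda>0$, a bound of the form $\expect{\xi}{\cosh(\lambda\nor{u+\xi}{2})}\le \cosh(\lambda\nor{u}{2})\,\exp\!\big(c\,\lambda^2\expect{}{\nor{\xi}{2}^2}\big)$, valid with the \emph{optimal} constant $c$ because a Hilbert space is $2$-smooth with smoothness constant $1$ (the parallelogram identity $\nor{u+\xi}{2}^2=\nor{u}{2}^2+2\scal{u}{\xi}_{HS}+\nor{\xi}{2}^2$ holding with equality). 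Iterating this over the $n$ increments, while using $\nor{Z_i}{2}\le R$ in Bernstein fashion to tame the higher-order terms, yields a bound on $\expect{}{\exp(\lambda\nor{M_n}{2})}$ of Bernstein shape, i.e.\ $\exp\!\big(\tfrac{\lambda^2 n\sigma^2/2}{1-\lambda R/3}\big)$ up to the precise constants.

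Next I would apply Markov's inequality to $\exp(\lambda\nor{M_n}{2})$ and optimize over $0<\lambda<3/R$, producing a tail bound of the standard sub-exponential form $\mathbb{P}\big(\nor{M_n}{2}\ge u\big)\le \exp\!\big(-\tfrac{u^2/2}{n\sigma^2+Ru/3}\big)$. Finally, to reach the additive form in the statement, I would set the right-hand side equal to $\delta$ and solve the resulting quadratic in $u$; bounding $\sqrt{a+b}\le\sqrt a+\sqrt b$ separates the variance and boundedness contributions and, after dividing by $n$, gives $\nor{\frac1n\sum_i Z_i - T}{2}\le \tfrac{\beta R}{n}+\sqrt{\tfrac{3\beta\sigma^2}{n}}$ with $\beta=\tfrac23\log\tfrac1\delta$ on an event of probability at least $1-\delta$.

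The main obstacle is the exponential smoothness estimate with sharp constants. A loose version is routine, but matching the classical scalar Bernstein constant in the limit $\hh=\R$ (as the closing remark of the statement requires) forces the Hilbert-space geometry to be used at full strength, i.e.\ the parallelogram law as an exact identity rather than a smoothness inequality with a suboptimal constant. This is exactly the content of Pinelis' original argument, so in practice I would invoke~\citet{pinelis1994optimum} for the supermartingale step and carry out only the Chernoff optimization and the inversion to the stated two-term form.
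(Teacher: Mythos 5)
Your proposal is correct and matches the paper's own treatment of this statement: the paper gives no proof at all, quoting the result from \citet{pinelis1994optimum}, and your argument reduces the operator statement to exactly that citation by viewing the Hilbert--Schmidt class as a separable Hilbert space (with $\sigma^2 = \tr S$ bounding the variance via $\expect{}{\nor{Z-T}{2}^2} = \tr \expect{}{(Z-T)^2} \le \tr S$ for self-adjoint $Z$), deferring the supermartingale core to Pinelis, and carrying out the Chernoff optimization and quadratic inversion, which does yield the stated two-term form. One minor caveat: the cosh--Markov step (and Pinelis's own theorems for norms of Hilbert-space-valued sums) carries a leading factor of $2$ in the tail bound, so inverting gives $\beta = \tfrac{2}{3}\log\tfrac{2}{\delta}$ rather than the stated $\tfrac{2}{3}\log\tfrac{1}{\delta}$; this discrepancy is inherited from the paper's paraphrase of Pinelis (whose probability inequality is also typo'd there as $\le 1-\delta$ instead of $\ge 1-\delta$) rather than being a flaw in your reasoning.
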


\section{Properties of positive semidefinite operators}\label{app:Lowner}

Let $\BH$ be the space of bounded linear operators in $\H$. 

\begin{definition}[L\"{o}wner's partial ordering]
Given positive semidefinite operators $A, B \in \BH$, it is $A \preceq B$ if
$B-A$ is positive semidefinite (i.e.\ $\scal{f}{Af} \leq \scal{f}{B f},$ for all $f \in \hh$). 
\end{definition}

\vspace*{0.05in}
\begin{lemma}[Properties of L\"{o}wner's partial ordering \citet{bourin1999some,Ando99}]
\label{prop:Lownprop}
 Let $A, B\in \BH$ be positive semidefinite such that $A \preceq B$, 
	and $C, D, E \in \BH$, $0\le r \le 1$ and $\nor{\cdot}{p}$ the $p$-Schatten norm with $p \geq 0$, 
then
 \begin{enumerate}
  \item $CAC^* \preceq CBC^*$
  \item $A^r \preceq B^r$ and $C A^r C^* \preceq \nor{C}{\infty}^{2 - 2r} (C A C^*)^r$
  \item $\nor{A}{p} \preceq \nor{B}{p}$ and $\nor{D C}{p} \preceq \nor{E C}{p}$ whenever $D^*D \preceq E^*E$.
 \end{enumerate}
\end{lemma}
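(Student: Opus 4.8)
The plan is to prove the three items in order, since each later item can be built from the earlier ones together with two classical operator-theoretic inequalities that I would cite rather than reprove. For item~1 I would argue directly from the definition of $\preceq$: because $A \preceq B$ means $B - A \succeq 0$, for every $f \in \hh$ it is $\scal{f}{C(B-A)C^* f} = \scal{C^* f}{(B-A) C^* f} \ge 0$, so that $CBC^* - CAC^* = C(B-A)C^* \succeq 0$, which is exactly $CAC^* \preceq CBC^*$. This is the only fully elementary step, and it is the workhorse used to conjugate inequalities by arbitrary operators in the rest of the proof.

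For item~2, the first claim $A^r \preceq B^r$ for $0 \le r \le 1$ is precisely the Löwner--Heinz inequality, i.e.\ the operator monotonicity of $x \mapsto x^r$ on $[0,\infty)$ for these exponents, which I would invoke directly (see \citet{Ando99}). For the second claim I would first dispatch the contraction case $\nor{C}{\infty} \le 1$: here $CA^rC^* \preceq (CAC^*)^r$ is Hansen's inequality applied to the operator-concave function $f(x)=x^r$ (which has $f(0)=0$), after replacing $C$ by $C^*$ in its usual form $C^* f(A) C \preceq f(C^* A C)$ \citep{bourin1999some}. The general case then follows by homogeneity: setting $c := \nor{C}{\infty}$ and $\tilde C := C/c$, which is a contraction, I would apply the contraction case and unwind the scaling, obtaining $CA^rC^* = c^2 \tilde C A^r \tilde C^* \preceq c^2(\tilde C A \tilde C^*)^r = c^2 (c^{-2} CAC^*)^r = c^{2-2r}(CAC^*)^r$.

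For item~3, the first claim reduces to eigenvalue monotonicity: $A \preceq B$ forces $\sigma_k(A) \le \sigma_k(B)$ for all $k$ by the Courant--Fischer min--max principle (the fact already quoted in the text, \citet{gohberg2003basic}), whence $\nor{A}{p}^p = \sum_k \sigma_k(A)^p \le \sum_k \sigma_k(B)^p = \nor{B}{p}^p$. The second claim I would obtain by chaining the earlier items: from $D^*D \preceq E^*E$ and item~1 (conjugating by $C^*$) I get $\abs{DC}^2 = C^* D^* D C \preceq C^* E^* E C = \abs{EC}^2$; applying item~2 with $r=\tfrac12$ gives $\abs{DC} \preceq \abs{EC}$; and finally the first claim of item~3 together with the isometry $\nor{T}{p} = \nor{\,\abs{T}\,}{p}$ yields $\nor{DC}{p} \le \nor{EC}{p}$.

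The main obstacle is concentrated entirely in item~2: both the Löwner--Heinz inequality and Hansen's inequality are genuinely deep and do not reduce to the elementary conjugation and summation arguments used elsewhere. The substantive content of the lemma is therefore the correct identification and citation of these two results; once they are in hand, items~1 and~3 are essentially bookkeeping built on top of item~1 and the spectral characterization of the Schatten norms.
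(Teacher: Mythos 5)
Your proposal is correct, and for items 1 and 2 it follows essentially the same route as the paper: the identical conjugation argument $\scal{x}{CAC^*x} = \scal{C^*x}{A\,C^*x}$ for item 1, and the same two classical ingredients (L\"owner--Heinz for $A^r \preceq B^r$, Hansen's inequality for the second claim) for item 2. In fact, for item 2 you supply a step the paper leaves implicit: the paper only quotes Hansen's inequality in the contraction case $\nor{W}{\infty}\le 1$, and your normalization $\tilde C = C/\nor{C}{\infty}$ together with the homogeneity $(\lambda T)^r = \lambda^r T^r$ is exactly the missing bridge to the stated constant $\nor{C}{\infty}^{2-2r}$. Where you genuinely diverge is item 3. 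The paper proves $\nor{A}{p} \le \nor{B}{p}$ via trace superadditivity, $\tr{(U+V)^p} \ge \tr{U^p} + \tr{V^p}$ for $U,V \succeq 0$ and $p\ge 1$, applied to $B = A + (B-A)$, and handles $0\le p\le 1$ by first invoking $A^p \preceq B^p$ (L\"owner--Heinz again) to reduce to the trace-norm case; you instead use eigenvalue monotonicity ($A \preceq B$ implies $\sigma_k(A)\le\sigma_k(B)$ by min--max) and sum $p$-th powers. Your route is more elementary and treats all $p>0$ uniformly without any trace inequality, at the modest and harmless price of implicitly assuming compactness (a non-compact positive operator has infinite Schatten norm for finite $p$, and the $p=\infty$ case is immediate from the definition of $\preceq$). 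For the second claim of item 3, the paper applies its first claim at exponent $p/2$ directly to $C^*D^*DC \preceq C^*E^*EC$ --- this is precisely why the lemma is stated for $p\ge 0$ rather than only $p\ge 1$ --- whereas you take operator square roots via L\"owner--Heinz with $r=\tfrac12$ to get $\abs{DC} \preceq \abs{EC}$ and then apply the first claim at exponent $p$ itself. Both variants are valid; yours trades one extra use of L\"owner--Heinz for never having to leave the exponent range $p\ge 1$ in the quasi-norm regime.
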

\begin{proof}
1) For all $x \in \hh$, and renaming $y := C^*x$, it is $\scal{x}{CAC^*x} = \scal{y}{Ay} \leq \scal{y}{B y} = \scal{x}{CBC^*x}$.

2) $A^r \preceq B^r$ is the well-known L\"{o}wner inequality~\citet{Lowner}, 
and by~\citet{hansen1980operator}, it is $W A^r W^* \preceq (W A^r W^*)^r$ when $\nor{W}{\infty} \leq 1$.

3) When $p \geq 1$, 
	it is $\tr{(U+V)^p} \geq \tr{U^p} + \tr{V^t}$ 
	when $U,V \geq 0$ (see~\citet{bourin1999some,Ando99} and references therein). 
	Therefore, it holds
\[
	\nor{B}{p}^p = \nor{A + (B-A)}{p}^p = \tr{(A + (B-A))^p} \geq \tr{A^p} + \tr{(B-A)^p} \geq \tr{A^p} = \nor{A}{p}^p
\]
When $0 \leq p \leq 1$, the fact that $A \preceq B$ implies $A^p \preceq B^p$, and therefore, 
	by the preceding argument it is 
	$\nor{A}{p}^\frac{1}{p} = \nor{A^p}{1}  \leq \nor{B^p}{1} = \nor{B}{p}^\frac{1}{p}$. 

Finally since it is, by definition, $\nor{D}{p} = \nor{D^*D}{\frac{p}{2}}^\frac{1}{2}$ then, 
	by the assumption $D^*D \preceq E^*E$, we have that
$\nor{D C}{p}^2 = \nor{C^* D^*D C}{\frac{p}{2}} \leq \nor{C^* E^*E C}{\frac{p}{2}} = \nor{E C}{p}^2$.
\end{proof}

\section{Auxiliary proofs}\label{app:aux}
Let $\rho$ be a probability measure supported in the unit ball of a separable Hilbert space $\H$, 
	with an associated covariance operator $\Cov$, 
	and $\Lambda_\rho(\hh)$ be the set of linear subspaces contained in %self-adjoint operators with range in 
	the span $\S_\rho$ of the support of $\rho$. 
For each $U\in\Lambda_\rho(\hh)$, let $\nor{U}{\alpha,p} :=  \| P_U \Cov^\alpha\|_p$, 
	where $P_U$ is the orthogonal projection operator onto $U$. 
%	and $\Cov$ is the covariance operator associated to $\rho$.

\begin{proposition}\label{lem:dalphap}
$(\Lambda_\rho(\hh),\nor{\cdot}{\alpha,p})$ with $0\leq \alpha\leq 1$, $1\le p\le\infty$ is a Banach space. 
%	where $\mathcal{S}_\rho(\hh)$ is the set of self-adjoint operators with range in $\S_\rho$, 
%	and $\nor{U}{\alpha,p} =  \| P_U \Cov^\alpha\|_p$.
%Here $\mathcal{S}_\rho(\hh) = \{ A~|~\ran A \subseteq \hh_\rho\}$.
\end{proposition}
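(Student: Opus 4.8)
The plan is to realize $(\Lambda_\rho(\hh),\nor{\cdot}{\alpha,p})$ as an isometric copy of a closed subset of the Schatten class $\mathcal{B}^p(\hh)$, which is itself a Banach space, and thereby reduce the statement to two claims: that the induced distance $d_{\alpha,p}(U,V)=\nor{(P_U-P_V)\Cov^\alpha}{p}$ separates points, and that the resulting metric space is complete. Concretely, I would introduce the embedding $\Phi\colon\Lambda_\rho(\hh)\to\mathcal{B}^p(\hh)$ given by $\Phi(U):=P_U\Cov^\alpha$. Under the standing well-definedness assumption $\nor{\Cov^\alpha}{p}<\infty$, every $\Phi(U)$ lies in $\mathcal{B}^p(\hh)$, since $\nor{P_U\Cov^\alpha}{p}\le\nor{P_U}{\infty}\nor{\Cov^\alpha}{p}\le\nor{\Cov^\alpha}{p}$ (submultiplicativity, cf.\ point 3 of Lemma~\ref{prop:Lownprop}), and by construction $\nor{U}{\alpha,p}=\nor{\Phi(U)}{p}$ and $d_{\alpha,p}(U,V)=\nor{\Phi(U)-\Phi(V)}{p}$. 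Symmetry and the triangle inequality are then inherited verbatim from $\nor{\cdot}{p}$.

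Next I would establish that $\Phi$ is injective, which supplies the separation axiom. Suppose $d_{\alpha,p}(U,V)=0$, i.e.\ $(P_U-P_V)\Cov^\alpha=0$. Recalling from Proposition~\ref{prop:range} that $\S_\rho=\overline{\ran{\Cov}}$, and that $\Ker{\Cov^\alpha}=\Ker{\Cov}=\S_\rho^\perp$ for every $\alpha>0$ (with the convention $\Cov^0=P_{\S_\rho}$ when $\alpha=0$), the range of $\Cov^\alpha$ is dense in $\S_\rho$. Hence $P_U$ and $P_V$ agree on the dense set $\ran{\Cov^\alpha}\subseteq\S_\rho$, so by continuity they agree on all of $\S_\rho$; since $U,V\subseteq\S_\rho$ both projections vanish on $\S_\rho^\perp$, whence $P_U=P_V$ and therefore $U=V$.

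The main work, and the main obstacle, is completeness, i.e.\ showing that $\Phi(\Lambda_\rho(\hh))$ is closed in $\mathcal{B}^p(\hh)$. Let $(U_n)$ be $d_{\alpha,p}$-Cauchy; then $A_n:=\Phi(U_n)$ is Cauchy in $\mathcal{B}^p(\hh)$ and converges in $\nor{\cdot}{p}$, hence also in operator norm since $\nor{\cdot}{\infty}\le\nor{\cdot}{p}$, to some $A\in\mathcal{B}^p(\hh)$. The crux is to exhibit $W\in\Lambda_\rho(\hh)$ with $A=P_W\Cov^\alpha$. The key observation is that $A_n=P_{U_n}\Cov^\alpha$ means $P_{U_n}(\Cov^\alpha x)=A_n x\to Ax$ for every $x$; thus the projections $P_{U_n}$ converge pointwise on the dense subspace $\ran{\Cov^\alpha}\subseteq\S_\rho$, and since $\nor{P_{U_n}}{\infty}\le1$ uniformly, a standard $3\varepsilon$-argument upgrades this to strong convergence $P_{U_n}\to P$ on all of $\hh$ for some bounded $P$ with $\nor{P}{\infty}\le1$.

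It then remains to verify that the strong limit $P$ is an orthogonal projection onto a subspace of $\S_\rho$. Self-adjointness passes to the limit via $\scal{Px}{y}=\lim\scal{P_{U_n}x}{y}=\lim\scal{x}{P_{U_n}y}=\scal{x}{Py}$, and idempotency follows because $P_{U_n}(P_{U_n}x)=P_{U_n}x\to Px$ while $\nor{P_{U_n}(P_{U_n}x)-P_{U_n}(Px)}{\hh}\le\nor{(P_{U_n}-P)x}{\hh}\to0$ and $P_{U_n}(Px)\to P^2x$, forcing $P^2x=Px$; moreover $Px=\lim P_{U_n}x\in\S_\rho$ since $\S_\rho$ is closed, so $\ran{P}\subseteq\S_\rho$. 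Writing $P=P_W$ with $W:=\ran{P}\in\Lambda_\rho(\hh)$, strong convergence $P_{U_n}\to P_W$ gives $A_n=P_{U_n}\Cov^\alpha\to P_W\Cov^\alpha$ strongly, while $A_n\to A$ in operator norm; uniqueness of strong limits yields $A=P_W\Cov^\alpha=\Phi(W)$, so $d_{\alpha,p}(U_n,W)=\nor{A_n-A}{p}\to0$. I expect the delicate point to be exactly this passage from a Schatten-norm limit of the weighted projections back to an honest projection: convergence of $P_{U_n}\Cov^\alpha$ controls $P_{U_n}$ only on the dense-but-not-closed range of $\Cov^\alpha$, so one genuinely needs uniform boundedness plus the fact that strong limits of orthogonal projections remain orthogonal projections to recover a bona fide $W\in\Lambda_\rho(\hh)$; in the case $p=\infty$ one additionally uses that $\Cov^\alpha$ is compact so the ambient limit stays in the correct class.
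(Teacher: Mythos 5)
Your proof is correct, and it takes a genuinely different (and substantially more thorough) route than the paper's. The paper's own proof addresses only identifiability of the single-subspace ``norm'': it observes that $\S_\rho \cap \Ker{\Cov^\alpha} = \{0\}$ (since $\S_\rho = \overline{\ran{\Cov}}$), so that $\nor{U}{\alpha,p} = \nor{P_U\Cov^\alpha}{p} = 0$ together with $U \subseteq \S_\rho$ forces $P_U = 0$ and $U = \{0\}$; every other property is declared ``direct'', and completeness is never discussed. Your argument supplies two things the paper does not. First, you prove the pairwise separation $d_{\alpha,p}(U,V)=0 \Rightarrow U=V$, via density of $\ran{\Cov^\alpha}$ in $\S_\rho$ and the fact that both projections vanish on $\S_\rho^\perp$; this is what is actually needed for the claim in Section~\ref{sec:d_R} that $d_{\alpha,p}$ is a metric, and it does not follow formally from the paper's single-subspace statement, since $\Lambda_\rho(\hh)$ has no linear structure in which $P_U - P_V$ is again ``an element''. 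Second, you prove completeness --- the only reading under which the word ``Banach'' is earned --- by embedding $\Phi(U) = P_U\Cov^\alpha$ isometrically into the $p$-Schatten class and showing its image is closed: the uniform bound $\nor{P_{U_n}}{\infty} \le 1$, pointwise convergence on $\ran{\Cov^\alpha}$, and the standard fact that strong limits of orthogonal projections are again orthogonal projections correctly recover a limit subspace $W = \ran{P} \subseteq \S_\rho$ with $\Phi(W) = A$. Two details you should make explicit: the $3\varepsilon$ argument gives strong convergence only on $\S_\rho = \overline{\ran{\Cov^\alpha}}$, and one extends to all of $\hh$ because each $P_{U_n}$ annihilates $\S_\rho^\perp$ (a fact you invoke in the injectivity step but not here); and the standing assumption $\nor{\Cov^\alpha}{p} < \infty$ (automatic for $p=\infty$, and equivalent to $\alpha p > 1/r$ under a decay-rate-$r$ assumption) is genuinely required for $\Phi$ to land in the Schatten class --- the paper needs it too but does not restate it in the proposition. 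In short, the paper's proof buys brevity and settles only the axiom it deems nontrivial; yours actually substantiates both the metric claim and the completeness claim.
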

\begin{proof}
From the definition of $\nor{\cdot}{\alpha,p}$, it is clear that all the norm properties are direct except for identifiability. 
Let $U\in\Lambda_\rho(\hh)$, and therefore $U\subseteq\S_\rho$. % and  $P_U\in\Lambda_\rho(\hh)$. 
%$\nor{P_U\Cov^\alpha}{p}=0$ implies $P_U\Cov^\alpha=0$, and therefore $U=\ran P_U\subseteq\ker\Cov^\alpha$. 
%
Since $\S_\rho=\ran \Cov$, then clearly $\S_\rho\cap\ker\Cov^\alpha=\emptyset$, 
	and this is true even for $\alpha=0$. 
%The proposition follows from the fact that $U\subseteq\S_\rho$. 
Let $\nor{U}{\alpha,p}=\nor{P_U\Cov^\alpha}{p}=0$. 
Since $\nor{\cdot}{p}$ is a norm, it must be $P_U\Cov^\alpha=0$, 
	or equivalently $U\subseteq\ker\Cov^\alpha$. 
Since $U\subseteq\S_\rho$ and $\S_\rho\cap\ker\Cov^\alpha=\emptyset$, 
	it is $P_U=0$, and therefore $U=\{0\}$. 
\end{proof}

\begin{proposition}\label{prop:range}
Given $\rho$ a Borel probability measure with support in the unit ball of a separable Hilbert space $\H$, 
	its second order moment $\Cov = \mathbb{E}_{X\sim\rho}X\otimes X$ 
	is a symmetric, positive semidefinite, compact linear operator with $\|\Cov\|_1 \le 1$. 
\end{proposition}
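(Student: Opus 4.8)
The plan is to realize $\Cov$ as a Bochner integral taking values in the trace class $\mathcal{B}^1(\hh)$, and then to read off each asserted property from the corresponding property of the integrand $x \otimes x$. First I would observe that for every $x$ in the unit ball the rank-one operator $x\otimes x$ is self-adjoint, positive semidefinite, and trace class, with $\tr(x\otimes x) = \nor{x}{\hh}^2 = \nor{x\otimes x}{1} \le 1$. Since the map $x \mapsto x\otimes x$ is continuous (using $\nor{x\otimes x - y\otimes y}{1}\le (\nor{x}{\hh}+\nor{y}{\hh})\nor{x-y}{\hh}$), hence Borel measurable, from $\hh$ into $\mathcal{B}^1(\hh)$, and its trace norm is bounded by $1$ on the support of $\rho$, it is Bochner integrable. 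Thus $\Cov = \expect{X\sim\rho}{X\otimes X}$ is a well-defined element of $\mathcal{B}^1(\hh)$.

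Next I would transfer the algebraic properties through the integral. For self-adjointness (symmetry), the self-adjoint operators form a closed linear subspace of $\mathcal{B}^1(\hh)$, and a Bochner integral of a function valued in a closed subspace stays in that subspace; alternatively one checks $\scal{u}{\Cov v} = \expect{}{\scal{u}{X}\scal{X}{v}} = \scal{\Cov u}{v}$ directly by linearity of the integral. For positive semidefiniteness, for every $v \in \hh$ one pulls the bounded scalar functional $A \mapsto \scal{v}{Av}$ inside the integral to get $\scal{v}{\Cov v} = \expect{}{\scal{v}{X}^2} \ge 0$.

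Finally, compactness and the trace bound. Since $\Cov$ lies in $\mathcal{B}^1(\hh)$ and trace-class operators are compact, $\Cov$ is compact. For the norm, because $\Cov \succeq 0$ its trace norm equals its trace, and applying the continuous linear functional $\tr$ to the Bochner integral gives $\nor{\Cov}{1} = \tr \Cov = \expect{}{\tr(X\otimes X)} = \expect{}{\nor{X}{\hh}^2} \le 1$, the last inequality holding because $\rho$ is supported in the unit ball. The only delicate point, and the main thing to justify with care, is the interchange of expectation with the trace (and with the scalar functionals above); this is exactly what Bochner integrability in $\mathcal{B}^1(\hh)$ buys, since every bounded linear functional, in particular $\tr$ and $A\mapsto\scal{u}{Av}$, commutes with the Bochner integral.
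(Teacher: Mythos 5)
Your proof is correct and follows essentially the same route as the paper's: symmetry and positive semidefiniteness by moving the relevant functional inside the expectation, and compactness via the bound $\|\Cov\|_1 \le 1$, using that trace-class operators are compact and that $\rho$ is supported in the unit ball. The only difference is that you explicitly justify the well-definedness of $\Cov$ as a Bochner integral in $\mathcal{B}^1(\H)$ and the resulting interchange of $\tr$ with the expectation, steps the paper's proof performs implicitly.
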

\begin{proof}
$\Cov$ is symmetric by virtue of being a sum of symmetric terms. 
For $u\in\H$, it is $\left<u,\Cov u\right> = \int_\H\left<x,u\right> d\rho(x) \ge 0$. 
Finally, to prove that $\Cov$ is compact, we show that its $1$-norm is finite:
\[
	\|\Cov\|_1 = \tr\left( \int_\H x\otimes x d\rho(x) \right)
			= \int_\H \tr(x\otimes x) d\rho(x) 
			= \int_\H \left<x,x\right>_\H d\rho(x) \le 1. 
\]
\end{proof}

\begin{proposition}\label{prop:range}
	The span of the support of $\rho$ is the range of its covariance operator: 
		$\S_\rho = \overline{\ran{\Cov}}$. 
\end{proposition}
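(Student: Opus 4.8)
The plan is to reduce the claim to a single subspace identity using self-adjointness. Since $\Cov$ is a bounded, self-adjoint, positive semidefinite (indeed compact) operator, as established just above, the closure of its range is the orthogonal complement of its kernel: $\overline{\ran{\Cov}} = (\Ker{\Cov})^\perp$. Because $\S_\rho = \overline{\text{span}(\support)}$ is by construction a closed subspace, it equals $(\S_\rho^\perp)^\perp$, so the target statement $\S_\rho = \overline{\ran{\Cov}}$ is equivalent to the single identity $\Ker{\Cov} = \S_\rho^\perp$, which I would prove directly.

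The key computation characterizes $\Ker{\Cov}$ through its quadratic form. For $u\in\H$, expanding the definition of $\Cov$ gives
\[
	\scal{u}{\Cov u} = \expect{x\sim\rho}{\abs{\scal{u}{x}}^2} = \int_\H \abs{\scal{u}{x}}^2 \, d\rho(x).
\]
Since $\Cov\succeq 0$, one has $\Cov u = 0$ iff $\scal{u}{\Cov u}=0$, so $u\in\Ker{\Cov}$ iff the nonnegative integrand above vanishes $\rho$-almost everywhere, i.e.\ iff $\scal{u}{x}=0$ for $\rho$-almost every $x$. On the other hand, $u\in\S_\rho^\perp$ iff $\scal{u}{x}=0$ for \emph{every} $x\in\support$, since orthogonality to the closed span is the same as orthogonality to a spanning set. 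Thus the proof reduces to showing that these two conditions — vanishing $\rho$-a.e.\ and vanishing on all of $\support$ — coincide.

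The main obstacle, and the only place topology enters, is bridging the ``almost everywhere'' and ``everywhere on the support'' statements. One direction is immediate: since $\support$ carries full measure ($\rho(\support)=1$ as $\support$ is the support of a probability measure), vanishing on $\support$ forces vanishing $\rho$-a.e. For the converse I would fix $u$ with $\scal{u}{x}=0$ $\rho$-a.e.\ and consider the set $A := \{x\in\H : \scal{u}{x}=0\}$, which is closed (the preimage of $\{0\}$ under the continuous linear functional $\scal{u}{\cdot}$) and satisfies $\rho(A)=1$, so that $A^c$ is open with $\rho(A^c)=0$. Using the characterization of $\support$ as the complement of the largest open null set — which is well-defined because $\H$ is separable, hence second countable, so that a countable subcover reduces the union of all open null sets to a null set — every open null set is disjoint from $\support$, whence $\support \subseteq A$ and $\scal{u}{x}=0$ for all $x\in\support$. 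This closes the equivalence, giving $\Ker{\Cov}=\S_\rho^\perp$; taking orthogonal complements yields $\overline{\ran{\Cov}}=(\S_\rho^\perp)^\perp=\S_\rho$, as claimed.
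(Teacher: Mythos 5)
Your proof is correct, and it shares the paper's skeleton: both arguments use self-adjointness of $\Cov$ to reduce the claim $\S_\rho = \overline{\ran{\Cov}}$ to the single identity $\Ker{\Cov} = \S_\rho^\perp$. Where you diverge is in how that identity is established. The paper proves the two inclusions by separate mechanisms: for $\Ker{\Cov} \subseteq \S_\rho^\perp$ it asserts that kernel vectors are orthogonal to the support ``by the definition of $\Cov$'' and then extends this to the closed span by continuity of the inner product on limits of linear combinations; for $\S_\rho^\perp \subseteq \Ker{\Cov}$ it computes $\scal{u}{\Cov w} = \int_\H \scal{u}{x}\scal{x}{w}\,d\rho(x) = 0$ for every $w$, concluding $u \perp \ran{\Cov} = (\Ker{\Cov})^\perp$. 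You instead reduce both subspaces to one scalar condition --- $u \in \Ker{\Cov}$ iff $\scal{u}{x}=0$ for $\rho$-a.e.\ $x$ (via the quadratic form and positive semidefiniteness), and $u \in \S_\rho^\perp$ iff $\scal{u}{x}=0$ for all $x \in \support$ --- and then do the real work of bridging ``$\rho$-a.e.'' with ``everywhere on $\support$'': closedness of the zero set of the functional $\scal{u}{\cdot}$, disjointness of open null sets from the support, and $\rho(\support)=1$ via second countability of the separable space $\H$. This is precisely the step the paper elides: its phrase ``it is clear that every $u \in \Ker{\Cov}$ is orthogonal to [the support]'' silently contains the a.e.-to-support passage you spell out, and its integral computation in the reverse inclusion likewise implicitly uses that $\rho$ assigns full measure to $\support$. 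So your version is more self-contained on the measure-theoretic side and unifies the two inclusions under one characterization; the paper's version is shorter, trading that rigor for an appeal to ``clear'' facts and a density/continuity argument on spanning combinations.
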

\begin{proof}
Since $\Cov$ is self-ajoint, then $\ran\Cov$ and $\ker\Cov$ 
	are orthogonal complements~\citet[p.~58]{retherford1993hilbert}, 
	and therefore $\S_\rho = \overline{\ran{\Cov}}$ is equivalent to $\S_\rho^\perp=\ker\Cov$. 

\noindent{\bf $[\S_\rho^\perp \supseteq \ker\Cov ]$}. 
By the definition of $\Cov$ it is clear that every $u\in\ker\Cov$ is orthogonal to $\S_\rho$. 
Furthermore, any $v \in\S_\rho$ can be written as an infinite linear combination of vectors 
	in the  support of $\rho$: $v = \lim_{k\rightarrow\infty} \sum_{i=1}^k \lambda_i x_i$, 
	where $x_i\in\supp\rho$. 
Since the dot product is continuous, for all $u\in\ker\Cov$, it holds
\[
	\left<u,v\right>_\H 
		= \left<u,\lim_{k\rightarrow\infty} \sum_{i=1}^k \lambda_i x_i\right>_\H 
		= \lim_{k\rightarrow\infty} \sum_{i=1}^k \lambda_i \left<u,x_i\right>_\H = 0
\]
and therefore $\ker\Cov \subseteq \S_\rho^\perp$. 

\noindent{\bf $[\S_\rho^\perp \subseteq \ker\Cov ]$}. 
Let $u\in\S_\rho^\perp$, and therefore, in particular $u$ is orthogonal to every vector in $\supp\rho$. 
If $v\in\ran\Cov$ then, by definition, there is $w\in\H$ such that $v=\Cov w$. 
Finally, it is
\[
	\left<u,v\right>_\H = \left<u,\Cov w\right>_\H 
		= \left< u, \int_\H x\left<x,w\right>_\H d\rho(x)\right> 
		= \int_\H \left<u,x\right>_\H \left<x,w\right>_\H d\rho(x) \underset{\left<u,x\right>=0}{=} 0
\]
\end{proof}

\noindent{\bf Kernel principal component analysis}. 
Let $K:Z\times Z\rightarrow\mathbb{R}$ be a continuous kernel (a symmeric, positive definite function), 
	on a space $Z$, and $\phi$ (one of) its corresponding feature map onto a
	reproducing-kernel Hilbert space $\H$, such that $K(u,v) = \left<\phi(u),\phi(v)\right>_\H$. 
Given samples $(z_i)_{1\le i\le n}$ in $Z$, 
	let $K_n\in\mathbb{R}^{n\times n}$ be the 
	symmetric, positive semidefinite matrix with entries $(K_n)_{ij} = K(z_i,z_j)$, 
	and $K_n = V\Sigma V^*$ its eigen-decomposition, 
	with eigenvalues $\{\sigma_i\}_{1\le i\le n}$ and eigenvectors $\{v_i\}_{1\le i\le n}$. 
%Note that $\{\sigma_i\}_{i=1}^n$ and $\{v_i\}_{i=1}^n$ can be computed since $K_n$ is known if the kernel $K$ is known. 
Letting $S^* = \left[\phi(z_1) \dots \phi(z_n) \right] = \left[x_1 \dots x_n \right]$ be the embedded samples, 
	then it is $K_n = S S^*$, and $\Cov_n = \frac 1 n S^* S$, 
	and therefore $K_n$ and $\Cov_n$ have the same eigenvalues (up to a factor of $n$). 
By considering the eigen-decomposition $\Cov_n = \frac 1 n U\Sigma U^*$, 
	where $U:\mathbb{R}^n\rightarrow\H$ is $U=[u_1 \dots u_n]$, 
	it follows that 
	$U = S^* V \Sigma^{-1/2}$
and therefore, for $z\in Z$, the projection of $\phi(z)$ onto the $j$-th top eigenvector $u_j$ of $\Cov_n$ is
\begin{equation}\label{eq:KPCA}
%\begin{split}
	\left<\phi(z), u_j\right>_\H = \left<\phi(z), \sigma_j^{-1/2} \displaystyle{ \sum_{l=1}^n x_l v_{jl} } \right>_\H 
			= \sigma_j^{-1/2} \displaystyle{ \sum_{l=1}^n K(x,k_l) v_{jl} }
%\end{split}
\end{equation}
where $v_{jl}$ is the $l$-th coordinate of the $j$-th eigenvector of $K_n$. 
Note that Eq.~\eqref{eq:KPCA} can be computed, 
	for $1\le j \le k$, from just the $k$ top eigenvectors and eigenvalues of $K_n$ 
	and, assuming $K_n$ given, 
		therefore has the same computational cost as 
		a ($k$-truncated) $n\times n$ eigen-decomposition. 
\vspace*{0.1in}
%		computing the top $k$ eigenvalues and eigenvectors of  $k$-truncated eigen-decomposition. 
%More generally, given a symmetric positive semidefinite 
%	function $K:X\times X\rightarrow\mathbb{R}$ 
%	(a kernel which, for technical reasons, we assume continuous) acting on two input points 
%		of some space $X$ (citet SVM), 
%	there is a suitable Hilbert space $\H$ such that $K(x,y) = <\phi(x),\phi(y)>_\H$ 
%	is the dot-product in $\H$~\citet{aronszajn1950theory}, 
%	where $\phi:X\rightarrow\H$ is the associated embedding, 
%	which is unique up to an inner product-preserving transformation. 

\begin{proposition}\label{prop:dR}
	Let $d_R(\S_\rho, \hat\S) = \mathbb{E}_{x\sim\rho} \|x - P_{\hat\S}(x)\|_{\H}^2$ 
		be the expected (squared) distance from samples to their projection onto a linear subspace $\hat\S$, 
		and $d_{\alpha,p}(\S_\rho,\hat\S) = \| (P_{\S_\rho} - P_{\hat\S}) \Cov^\alpha \|_p$. 
	It is
		$ d_R(\S_\rho, \hat\S) = d_{1/2,2}(\S_\rho,\hat\S)^2. $
\end{proposition}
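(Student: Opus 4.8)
The plan is to show that both $d_R(\S_\rho,\hat\S)$ and $d_{1/2,2}(\S_\rho,\hat\S)^2$ equal the single trace $\tr{(\Delta\Cov)}$, where $\Delta := P_{\S_\rho} - P_{\hat\S}$. The two structural facts I will lean on are that the estimate is nested, $\hat\S \subseteq \S_\rho$, and that $\ran\Cov \subseteq \S_\rho$ (Proposition~\ref{prop:range}), equivalently $\supp\rho\subseteq\S_\rho$.

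First I would observe that $\Delta$ is itself an orthogonal projection: it is self-adjoint as a difference of self-adjoint projections, and since $\hat\S\subseteq\S_\rho$ the two projections commute with $P_{\S_\rho}P_{\hat\S}=P_{\hat\S}P_{\S_\rho}=P_{\hat\S}$, so a direct expansion gives $\Delta^2 = P_{\S_\rho}-P_{\hat\S}=\Delta$ (indeed $\Delta = P_{\S_\rho\cap\hat\S^{\perp}}$). Then I would unfold the right-hand side using the $2$-Schatten (Hilbert--Schmidt) norm from Equation~\eqref{eq:ddef}: $d_{1/2,2}(\S_\rho,\hat\S)^2 = \nor{\Delta\Cov^{1/2}}{2}^2 = \tr{((\Delta\Cov^{1/2})^*\Delta\Cov^{1/2})} = \tr{(\Cov^{1/2}\Delta^2\Cov^{1/2})}$, and by cyclicity of the trace together with $\Delta^2=\Delta$ this collapses to $\tr{(\Delta\Cov)}$.

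Next I would treat the left-hand side. Because $\supp\rho\subseteq\S_\rho$, every sample satisfies $x=P_{\S_\rho}x$, so $x-P_{\hat\S}x = (P_{\S_\rho}-P_{\hat\S})x=\Delta x$, and using that $\Delta$ is a self-adjoint idempotent, $\|x-P_{\hat\S}x\|_{\H}^2=\scal{\Delta x}{\Delta x}=\scal{x}{\Delta x}=\scal{\Delta}{x\otimes x}_{_{HS}}$. Taking the expectation over $x\sim\rho$ and using $\Cov=\expect{x\sim\rho}{x\otimes x}$ --- exactly the interchange already employed in Equation~\eqref{eq:kpca} --- yields $d_R(\S_\rho,\hat\S)=\scal{\Delta}{\Cov}_{_{HS}}=\tr{(\Delta\Cov)}$. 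Comparing the two computations closes the proof.

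The argument is essentially bookkeeping, so there is no deep obstacle; the one point that must be handled with care is the projection algebra, namely that the nesting $\hat\S\subseteq\S_\rho$ is what makes $\Delta$ idempotent (this is where the hypothesis that $\hat\S$ is contained in $\S_\rho$ enters) and that $\ran\Cov\subseteq\S_\rho$ lets me drop the leading $P_{\S_\rho}$. The only analytic subtlety is the exchange of expectation and trace, which is licensed by $\Cov$ being trace-class with $\nor{\Cov}{1}\le 1$ on the unit ball (Proposition~\ref{prop:range}) and is already used elsewhere in the paper.
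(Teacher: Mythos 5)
Your proof is correct, and at its core it is the same computation as the paper's: rewrite $x-P_{\hat\S}x$ as $\Delta x$ with $\Delta:=P_{\S_\rho}-P_{\hat\S}$, using $\supp\rho\subseteq\S_\rho$; convert the expected squared norm into a trace against $x\otimes x$; exchange expectation and trace (licensed by $\nor{\Cov}{1}\le 1$); and recognize the result as the squared Hilbert--Schmidt norm $\nor{\Delta\Cov^{1/2}}{2}^2$. The one genuine difference is your detour through the idempotency $\Delta^2=\Delta$. The paper never uses it: it keeps $\Delta^2$ throughout and identifies both sides with $\tr{(\Cov^{1/2}\Delta^2\Cov^{1/2})}$ directly via $\nor{A}{2}^2=\tr{(A^*A)}$ with $A=\Delta\Cov^{1/2}$, so no relation between $\hat\S$ and $\S_\rho$ is ever required. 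Your idempotency step is the only place where the nesting hypothesis $\hat\S\subseteq\S_\rho$ enters, and this slightly narrows the scope: the proposition is stated for an arbitrary linear subspace $\hat\S$, and it is true at that generality, whereas the collapse of $\tr{(\Cov^{1/2}\Delta^2\Cov^{1/2})}$ to $\tr{(\Delta\Cov)}$ fails when $\hat\S\not\subseteq\S_\rho$ (there $\Delta$ is no longer a projection). Since every estimate used in the paper is contained in $\S_\rho$, your argument covers all of its applications; still, if you simply stop at $\tr{(\Cov^{1/2}\Delta^2\Cov^{1/2})}$ on both sides --- i.e.\ omit the idempotency step --- you recover the paper's proof verbatim together with its full generality. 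What the detour buys in exchange is a cleaner geometric identity, $d_R(\S_\rho,\hat\S)=\tr{\left(P_{\S_\rho\cap\hat\S^\perp}\Cov\right)}$, which is illuminating but not needed here.
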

\begin{proof}
By the linearity of the trace, it holds:
\begin{equation*}\begin{split}
	d_R(\S_\rho, \hat\S) &= \int_\H \|x - P_{\hat\S}(x)\|^2_\H d\rho(x) 
		= \int_\H \| (P_{\S_\rho}) - P_{\hat\S}) x\|^2_\H d\rho(x)  \\
		&= \int_\H \left<x,(P_{\S_\rho} - P_{\hat\S})^2 x\right>_\H d\rho(x)
		= \int_\H \tr\left((P_{\S_\rho} - P_{\hat\S})^2 x\otimes x\right) d\rho(x) \\
		&= \tr\left(P_{\S_\rho} - P_{\hat\S})^2 \int_\H x\otimes x d\rho(x)\right)
		= \tr\left( \Cov^{1/2}(P_{\S_\rho} - P_{\hat\S})^2 \Cov^{1/2}\right) \\
		&= \| (P_{\S_\rho} - P_{\hat\S}) \Cov^{1/2}\|^2_2 
		= d_{1/2,2}(\S_\rho,\hat\S)^2
\end{split}\end{equation*}
\end{proof}

\end{document}